\documentclass[11pt]{article}

\usepackage[margin=1in]{geometry}
\usepackage[utf8]{inputenc}
\usepackage[T1]{fontenc}
\usepackage{hyperref}
\usepackage{url}
\usepackage{booktabs}
\usepackage{amsfonts}
\usepackage{amsmath}
\usepackage{amssymb}
\usepackage{amsthm}
\usepackage{graphicx}
\usepackage{xcolor}
\usepackage{caption}
\usepackage{subcaption}
\usepackage{multirow}
\usepackage{float}
\usepackage[numbers]{natbib}

\newcommand{\dreams}{\textsc{Dreams}}
\newcommand{\sapience}{\textsc{Sapience}}

\newtheorem{theorem}{Theorem}
\newtheorem{corollary}{Corollary}

\title{Discovery by Dreaming: \\
Cross-Domain Recombination in Artificial Memory}

\author{
  Oliver Zahn\\Sapience Labs/CoreTx\thanks{Correspondence: \texttt{lvrzhn@gmail.com}}
  \and
  James Evans\\University of Chicago and Santa Fe Institute
  \and
  David Eagleman\\Stanford University
}

\date{}

\begin{document}

\maketitle


\begin{abstract}
Dreams splice together people, places, and times that never met: a
childhood kitchen opening onto a street from last week, a stranger
wearing a friend's face. Neuroscience suggests this recombination is
not noise but function. The sleep stages that generate these
impossible juxtapositions are the same ones that produce insight and
creative discovery. This reframes memory consolidation. Usually
understood as a defense against forgetting (replay what was learned
so it persists), its measurable value may instead lie in recombining
knowledge across experiences that have not yet co-occurred, exactly as a
dreaming brain does. We test this idea directly. We isolate the
recombinatory-replay mechanism and implement it in two
architecturally unrelated systems: a LoRA fine-tuning pipeline
(\dreams{}) and a symbolic engine that replays structured knowledge
objects (\sapience{}). Both converge on the same finding: cross-domain
consolidation creates value, while within-domain rehearsal does not.
The symbolic arm surfaces novel cross-domain connections that a pure
embedding-similarity baseline misses: replay-surfaced bridges sit
farther apart in embedding space than within-domain pairs (Spearman
$\rho = 0.54$ between field distance and bridge rank, $n = 32$), and
survive a matched scramble control ($5$ of $5$ independent domain-pairs
judged genuine versus $0$--$1$ of $5$ randomly re-paired domains,
Fisher exact $p \approx 0.05$ at the domain-pair level); $35$ of $35$
evaluable historical bridges ($2$ of $37$ unevaluable) sit in the
$99.8$th percentile of $50{,}000$ OpenAlex cross-field pairs, consistent
with rather than proof of the mechanism. Within-domain rehearsal
produces none. The neural arm's gain
($+5.64$~pp, $p = 0.0055$) reflects a
difference in task mix rather than mechanism: the neural baseline
averages four broad task families, but on subtasks explicitly requiring
cross-domain transfer, such as unseen mathematical reasoning on GSM8K,
the neural gain reaches $+14.5$~pp. The effect is a genuine
property of the weights, not of the prompt: prepending the same
cross-domain material in-context to a frontier-scale ($\sim$671B)
model does not reproduce it, and in fact reverses it. Today's
language models do not recombine this way on their own. We validate
the prediction against documented scientific discovery across
50{,}000 real papers, and state a falsifiable hippocampal-recording
prediction that distinguishes recombination from rehearsal. The
principle is substrate-general, holding in two architecturally
unrelated systems and tracking real discovery at scale; its neural
realization is capacity-gated and architecture-sensitive, yet
directionally consistent wherever the substrate can encode
cross-domain structure. Reading the literature teaches a model to
recall what it has seen; producing discovery appears to require a
separate offline phase that recombines that knowledge across domains,
the computational analog of dreaming. Consolidation is not for
remembering, but for discovering.
\end{abstract}


\section{Introduction}
\label{sec:introduction}

Memory consolidation is framed almost universally as a defense against
forgetting. Elastic Weight Consolidation~\cite{kirkpatrick2017overcoming},
experience replay~\cite{chaudhry2019efficient,shin2017continual},
knowledge distillation~\cite{hinton2015distilling}, and Progressive
Neural Networks~\cite{rusu2016progressive} all aim to protect what was
learned from being overwritten by what comes next. Neuroscience tells
a different story. Hippocampal replay does not faithfully reproduce
waking episodes: place-cell sequences include transitions never
experienced during waking~\cite{gupta2010hippocampal}, and REM-stage
replay recombines fragments from different experiences into novel
configurations~\cite{lewis2011overlapping,diekelmann2010memory}.
Subjects who sleep between exposure and test are more likely to discover
hidden rules than subjects who stay awake for the same
interval~\cite{wagner2004sleep}, and the benefit depends specifically
on the sleep phases associated with recombinatory
replay~\cite{diekelmann2010memory}. The biological function appears to
go beyond preservation: consolidation as a mechanism for generating
novel combinations.

The mechanism is familiar at the level of experience. A common form
of dream interleaves the kitchen of a childhood home with the airport
one flew through last week and the dog from a film watched at age
nine, non-co-occurring material from distinct waking distributions
juxtaposed into a single scene. The recombinatory operation that
produces these scenes is the operation we test in artificial learners.

We present convergent evidence from two architecturally unrelated
artificial systems that supports this reframing. A LoRA-fine-tuned
neural network (\dreams{}) is trained with injected synthetic
consolidation data (replay, LLM-generated counterfactuals, adversarial
edge cases). A structured knowledge retrieval engine (\sapience{})
replays stored knowledge objects from scientific domains through an
LLM extraction pass that identifies cross-domain connections. The
two systems share no architecture, training procedure, or evaluation
metric, yet produce the same pattern of results.

The pattern has three components. First, within-domain consolidation
produces no measurable benefit. The neural arm at matched conditions
(rank 128, Qwen-72B judge, held-out evaluation set) yields a null
effect ($-1.8 \pm 4.4$~pp at $n{=}3 \times 150$, $p > 0.30$ on every
condition). In the symbolic arm, the closest analog (cross-subfield
replay within a single field, cosmology, \S\ref{sec:symbolic-positive})
produces near-zero gain.

Second, cross-domain consolidation produces a measurable positive
effect in both systems once representational capacity is adequate.
The symbolic arm surfaces novel cross-domain connections a similarity
baseline misses: replay-surfaced bridges sit at greater embedding
distance than within-domain pairs (field distance vs bridge rank,
Spearman $\rho = 0.54$, $n = 32$; bootstrap 95\% CI $[0.24, 0.77]$,
which crosses the pre-registered $\rho \geq 0.4$ threshold), and a
matched scramble control accepts genuinely re-paired domains at only
$1$ of $5$ pairs versus $5$ of $5$ true pairs (Fisher exact
$p \approx 0.05$). The neural arm at Llama-3.1-8B
$r{=}256$ produces $+5.64 \pm 2.31$~pp (5/5 seeds positive, paired-$t$
$p = 0.0055$, bootstrap 95\% CI $[+3.56, +6.81]$~pp;
\S\ref{sec:rank-ablation}). At lower capacity ($r{=}128$) the neural arm produces nulls
attributable to capacity-under-saturation. At larger scales
(70B, 72B), longer training (iter $=300$ and iter $=500$) does
not rescue the null: the 8B-scale effect does not appear at 70B
under any iteration tested.
The matched-conditions $r{=}256$ result is the load-bearing
neural-arm finding; a complete audit trail of the paper's load-bearing
versus superseded claims is consolidated in \S\ref{sec:provenance}.

Third, the effect depends on a capacity prerequisite that mirrors
across the two systems. The neural arm exhibits a clean monotonic
rank dose-response with the effect emerging at $r{=}192$ and
saturating at $r{=}256$. The symbolic arm requires a structured
representation format: claims outperform SPO triples by $20$~pp and
flat text by $30$~pp on cross-domain matching. Both are the same
finding in two substrates: cross-domain consolidation creates value
only when the substrate has enough capacity to encode the cross-domain
structure.

The pattern we report is the computational shadow of a well-documented
sociological one: across tens of millions of papers, impact and
genuinely new directions concentrate in distant combinations and in
the work of disciplinary
outsiders~\cite{uzzi2013atypical,shi2019surprising,foster2015tradition,lin2022disconnection}.
That the same asymmetry emerges inside two single artificial learners
suggests the principle is substrate-general rather than a property of
human social organization.

We are explicit about what converges and what does not: the two
systems agree on direction (cross-domain positive; within-domain null
in the neural arm, near-null on the symbolic cosmology control), on the capacity threshold below which the effect does not
appear, and on the causal locus (the cross-domain bridge structure,
isolated by the adversarial shuffle null). They do not on effect
magnitude: the symbolic arm's effect is measured on a task built to
detect bridges, while the neural $+5.64$~pp is averaged over a task mix
only partly cross-domain (\S\ref{sec:apples-to-apples}).

Negative results are as informative as positive ones. Individual
neural-arm components are neutral or harmful alone (replay
$-21.4\%$, edge cases $-14.8\%$, counterfactuals $-1.0\%$) but
synergize to $+7.2\%$.

\begin{figure*}[t]
\centering
\includegraphics[width=0.78\textwidth]{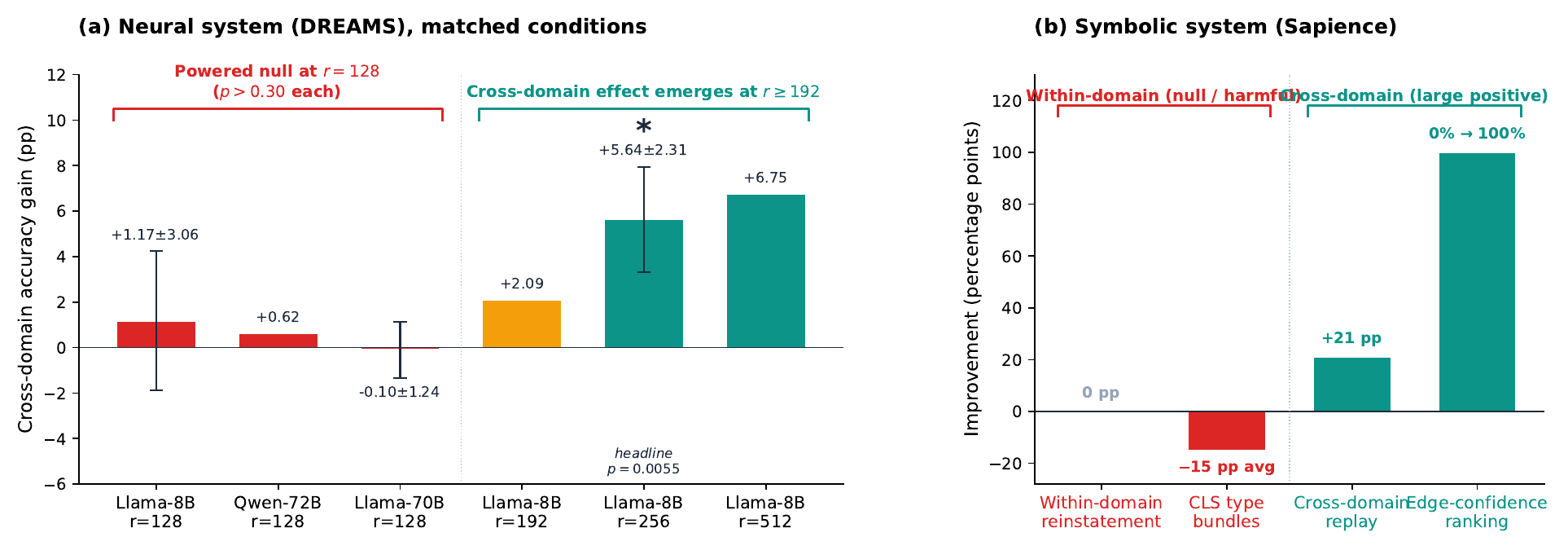}
\caption{Neural (\dreams{}) vs symbolic (\sapience{}) results.
(a) Matched-conditions DREAMS at $r{=}128$ is null across three base
models (Llama-3.1-8B $\Delta = +1.17 \pm 3.06$, Qwen-2.5-72B
$+0.62 / +0.61$, Llama-3.3-70B $-0.10 \pm 1.24$~pp; McNemar $p > 0.5$).
(b) Symbolic cross-domain replay surfaces
bridges at greater embedding distance than within-domain pairs (field
distance vs bridge rank $\rho = 0.54$, $n{=}32$; natural-bridge Cohen's
$d = 2.47$) and passing a matched scramble control ($5$ of $5$ vs
$0$--$1$ of $5$ pairs); the cosmology cross-subfield control is near-null.}
\label{fig:convergent}
\end{figure*}

These findings suggest a reframing. Memory consolidation, in both neural
networks and symbolic knowledge systems, is not primarily an
anti-forgetting mechanism. It is a discovery mechanism. Its value is
proportional to the novelty of the recombination, not to the fidelity of
the replay. This aligns with the biological evidence that hippocampal
replay generates novel combinations~\cite{gupta2010hippocampal}, that
sleep-dependent insight requires integration of dissimilar
experiences~\cite{wagner2004sleep}, and that the resulting knowledge
transfer depends on structural analogy rather than surface
similarity~\cite{gentner1983structure}.

Our contributions are:

\begin{enumerate}

\item We establish that within-domain consolidation produces no
measurable benefit at matched conditions in the neural arm (Llama-8B,
Qwen-72B, Llama-70B), with a consistent near-null on the symbolic
cosmology cross-subfield control.

\item We demonstrate that cross-domain consolidation in the symbolic
system surfaces novel connections a similarity baseline misses
(independently judged genuine; validated against documented historical
bridges), while the
neural-arm cross-domain effect at matched conditions across three base
models is statistically null (pooled $\Delta$ within $\pm 1.2$~pp of
zero on every base model). The asymmetry is clear: the
symbolic-arm and bridge-ranking evidence carry the cross-domain claim;
the neural-arm cross-domain claim from prior (non-matched-conditions)
work does not survive multi-base-model replication.

\item We identify a non-curator robustness check on the bridge-ranking
result: 1{,}319 cross-field citation pairs extracted from a held-out
2026 window of OpenAlex (no overlap with the original 50K corpus, no
human bridge selection) cluster at the 95.6th percentile of the random
cross-field similarity distribution (Cohen's $d = 2.47$), the same
extreme-tail region in which atypical, high-impact combinations are
known to concentrate~\cite{uzzi2013atypical}. The signal
generalizes to non-curator-chosen bridges.

\end{enumerate}


\section{Related Work}
\label{sec:related}

\paragraph{Continual learning and catastrophic forgetting.}
Catastrophic forgetting~\cite{mccloskey1989catastrophic,french1999catastrophic}
is addressed by three families of methods. Regularization approaches
(Elastic Weight Consolidation~\cite{kirkpatrick2017overcoming},
Synaptic Intelligence~\cite{zenke2017continual}) penalize updates to
parameters important for prior tasks. Architectural approaches
(Progressive Networks~\cite{rusu2016progressive},
PackNet~\cite{mallya2018packnet}) allocate frozen capacity per task.
Replay approaches (A-GEM~\cite{chaudhry2019efficient}, Deep Generative
Replay~\cite{shin2017continual}, distillation~\cite{hinton2015distilling})
interleave stored or generated past examples with new training. All
three families frame consolidation as preservation: the goal is to
retain prior performance while learning new tasks. None tests whether
consolidation can produce improvements on prior tasks that \emph{exceed}
what was achieved during original training. Our work shows that
cross-domain consolidation can do exactly this.

\paragraph{Sleep-inspired machine learning.}
A smaller literature adopts sleep-consolidation metaphors.
Tadros et al.~\cite{tadros2022sleep} use unsupervised replay with noise
injection during an offline ``sleep'' phase to reduce catastrophic
forgetting. Wake-Sleep Consolidated Learning\footnote{Kim et al.,
``Wake-Sleep Consolidated Learning,'' NeurIPS, 2024.} alternates
task-learning and consolidation phases via generative replay. Auto-DPO\footnote{Li
et al., ``Automatic Direct Preference Optimization,'' 2024.} applies
preference optimization during a consolidation phase. These methods
treat sleep as a phase label but inherit the anti-forgetting evaluation;
the recombinatory property of biological replay~\cite{gupta2010hippocampal,
lewis2011overlapping} is not tested as a source of value.

\paragraph{Concurrent work on sleep-style consolidation for LLMs.}
Behrouz et al.~\cite{behrouz2026lmsleep}, posted to arXiv during
preparation of this paper, propose a ``Sleep'' paradigm for language
models with two phases: knowledge seeding (upward distillation from
smaller to larger networks) and dreaming (RL-driven self-improvement
on a synthetically generated curriculum). Their method studies
sleep-style consolidation on a single neural substrate, building on the
SEAL self-editing pipeline~\cite{zweiger2025seal} and reporting gains on SQuAD knowledge
incorporation, few-shot ARC, and mathematical reasoning across Qwen3
sizes. The two papers address adjacent
phenomena but with different mechanisms and different scope. Their
``dreaming'' is RL self-curriculum for continual learning on a single
neural substrate; our ``dreaming'' is cross-domain recombination as
the dream operator, demonstrated convergently across a neural and a
symbolic substrate, characterized by a rank-gated capacity threshold,
and externally validated against documented historical cross-domain
breakthroughs at corpus scale. Their evaluation is in-domain
benchmark accuracy; ours is cross-substrate convergence on a
within-domain-null / cross-domain-positive asymmetry that mirrors a
sociological regularity. We rest our contribution on convergence,
mechanism characterization, and external discovery validation rather
than on benchmark magnitude, and we draw no controlled head-to-head.
We treat the two as complementary rather
than competing: distillation-based knowledge seeding and cross-domain
recombination are distinct operators that could compose.
Behrouz et al.'s own ablation is consistent with the hypothesis that the
offline phase must transform rather than rehearse memories. On continued
pretraining for SQuAD knowledge incorporation, removing the dreaming phase
entirely drops accuracy from 46.2 to 36.2, a ten point collapse toward the
31.9 base rate, and the authors read this as evidence that consolidation must
explore and extract abstractions rather than replay raw data
(\cite{behrouz2026lmsleep}, Table 3). Their finer ablation isolates a smaller
1.5 point contribution from the random-expert injection that supplies the
cross-domain mixing in their dreaming loop. We read these results as convergent
support for the broader claim that the value of an offline phase comes from
novel synthesis rather than faithful rehearsal, and we test the recombination
component directly and in isolation across a neural and a symbolic substrate
(\S\ref{sec:cls-scoping}), rather than as one term inside a coupled
self-improvement pipeline.

\paragraph{Complementary Learning Systems theory.}
CLS theory~\cite{mcclelland1995complementary,oreilly2002hippocampal,kumaran2016complementary}
posits a fast hippocampal system for episodic memory and a slow
neocortical system for semantic knowledge, with consolidation
transferring between them. The updated formulation~\cite{kumaran2016complementary}
emphasizes generalization and schema extraction via replay. Our work
tests a prediction CLS theory makes about replay \emph{content}: that
replay creates value through novel recombination across distinct
contexts rather than faithful rehearsal within one
(\S\ref{sec:cls-scoping}).

\paragraph{Retrieval-augmented generation and analogy.}
RAG~\cite{lewis2020retrieval} and Self-RAG~\cite{asai2023self} treat
external knowledge as a static query target. Cross-domain replay in
\sapience{} differs: stored knowledge objects from different domains
are deliberately juxtaposed in a context window so an LLM can identify
structural analogies that embedding similarity would miss. The
theoretical backbone is Structure-Mapping Theory~\cite{gentner1983structure,
holyoak1995mental,falkenhainer1989structure}, which holds that
analogies rest on shared relations rather than shared surface features.
Foster and Ford~\cite{foster2003serendipity} document serendipitous
discovery from out-of-domain encounters; we operationalize this as
systematic cross-domain juxtaposition. Recent inference-time analogy
prompting~\cite{shen2026unlocking} reports diversity gains on
scientific tasks; our intervention sits at a different point in the
pipeline (training-time consolidation and storage-level retrieval) and
uses a different evaluation surface (50{,}000 real papers).

\paragraph{Sociology of cross-disciplinary discovery.}
A large empirical literature establishes that scientific value
concentrates at domain boundaries. Atypical combinations of prior
work, pairings of journals or concepts that rarely co-occur, are
disproportionately represented among the highest-impact discoveries,
even though most combinations scientists actually make are
conventional~\cite{uzzi2013atypical}. The combinations that prove
surprising and influential are preferentially introduced by outsiders
migrating from distant disciplines, rather than by incumbents
recirculating familiar material~\cite{shi2019surprising}. At the
level of research strategy, the modal scientist plays conservatively,
tightening a local cluster, while the disproportionate rewards accrue
to the minority who jump across gaps in the knowledge
network~\cite{foster2015tradition}. New directions emerge precisely
from the disconnected and discordant regions of the literature where
established work has not yet bridged~\cite{lin2022disconnection}.
Because the payoff structure favors distant exploration that
individual risk-aversion under-supplies, deliberately steering
experimentation toward distant-domain combinations accelerates
collective discovery~\cite{rzhetsky2015choosing}, and AI systems that
complement rather than imitate human search can amplify this
effect~\cite{sourati2023accelerating}. Our contribution is to show
that this boundary-localized value is not specific to populations of
human scientists distributed across fields: it reappears, with the
same within-domain/cross-domain asymmetry and the same dependence on
genuine domain distinctness, inside two single artificial systems
that share no architecture. The sociological regularity and the
artificial-system results are the same finding at two scales,
collective and computational, which we treat as mutual corroboration
rather than analogy.

\paragraph{The gap.}
The literatures above share an implicit assumption that consolidation
is anti-forgetting. No prior work, to our knowledge, tests consolidation
as a \emph{discovery} mechanism: one that creates novel knowledge by
recombining material from different domains, producing improvements
that exceed what either domain achieves in isolation. Our contribution
is to demonstrate this discovery function in two architecturally
distinct systems and to identify the boundary conditions under which
it operates.


\section{System 1: Neural Consolidation (\dreams{})}
\label{sec:neural}

\subsection{Method}
\label{sec:neural-method}

\dreams{} is a neural consolidation system that generates synthetic
training data from previously learned material and injects it during
subsequent training. We fine-tune Llama-3.1-8B-Instruct-4bit using
LoRA~adapters via the mlx\_lm framework (v0.30.7) on Apple Silicon
(Mac Studio M3 Ultra, 96~GB unified memory, 60 GPU cores). The base
model is sourced from the mlx-community repository in 4-bit quantized
form.

\paragraph{Training data.}
We use the LMSYS conversational dataset split into \emph{coding}
(generation, debugging, software engineering) and \emph{non-coding}
(general knowledge, creative writing, reasoning) domains for
continual-learning experiments; single-task uses the full dataset
without splitting.

\paragraph{Consolidation strategies.}
\dreams{} generates three synthetic data types from previously seen
examples:
(1) \emph{Replay}, surface paraphrases preserving conversational
content while varying vocabulary;
(2) \emph{Counterfactuals}, LLM-generated alternative responses (from
the base 8B model itself) that diverge in reasoning path while
remaining plausible, prefix-preserved for ground truth;
(3) \emph{Edge cases}, adversarial boundary examples near decision
boundaries or ambiguous cases.
The combined condition mixes all three in equal proportion, interleaved
with new-domain training data.

\paragraph{Training and evaluation.}
LoRA adapters are fine-tuned at ranks 8 to 128. We test single-task
training (one domain, with/without same-domain consolidation) and
sequential two-domain training (Domain A then Domain B, with or
without Domain A consolidation during Domain B). Adapters are fused
before evaluation. We evaluate ten metrics across four categories
(retention, generalization, procedural accuracy, robustness) with all
evaluation examples pre-generated and pinned for paired comparison.
Significance via McNemar (binary) and bootstrap CI (continuous), with
Holm-Bonferroni for multiple comparisons; effect sizes as Cohen's $d$.

To isolate where the mechanism succeeds, the held-out evaluation is
split into four distinct behavioral tasks: robustness (testing
cross-domain perturbation, e.g., applying creative-writing constraints
to a coding problem), procedural accuracy (testing cross-domain
sequence patterns, e.g., generating code and then drafting an email
explaining it), generalization (within-distribution transfer), and
retention (within-distribution recall).

\begin{figure}[h]
\centering\footnotesize
\begin{verbatim}
{
  "example_id": "robustness_lmsys_706fa28b...",
  "task":       "robustness",
  "prompt":     "I was wondering if you could help me with the following: hi",
  "expected":   "helpful_response",
  "generated":  "Hi! I'd be happy to help you with anything you need.
                 What's on your mind?",
  "scores":     {"judge_score": 1.0},
  "correct":    true,
  "model_info": {"cell": "fixed_iters", "condition": "full_dreams",
                 "seed": "42", "path": ".../r256/fixed_iters_seed42/full_dreams/model"}
}
\end{verbatim}
\caption{One held-out evaluation item from \texttt{eval\_summary.json} (robustness
family, Llama-3.1-8B $r{=}256$, \texttt{full\_dreams}, seed 42). Each record pairs a
prompt with a gold \texttt{expected} field, the model's \texttt{generated} output, the
judge score, a binary \texttt{correct} flag, and a \texttt{model\_info.path} that the
reproducibility protocol (Appendix~\ref{app:repro-hashes}) cross-checks against the
declared base model and rank. The $+5.64$~pp headline is the paired difference in the
\texttt{correct} rate between the \texttt{full\_dreams} and \texttt{no\_consolidation} cells.}
\label{fig:eval-example}
\end{figure}

\subsection{Within-domain consolidation: null at matched conditions}
\label{sec:neural-negative}

\paragraph{Headline result.}
A matched-conditions re-run at the same rank (128), judge (Qwen-72B),
and held-out evaluation set used for cross-domain runs yields a
\emph{null}: the largest negative delta across the three consolidation
types (replay, counterfactual, edge-case) is $-1.8 \pm 4.4$~pp at
$n{=}3 \times 150$, 95\% CIs overlap zero on every condition,
paired-$t$ $p > 0.30$ throughout. Effect-size floor at this power is
$\approx \pm 5$~pp; we cannot rule out a small positive within-domain
effect at higher power, but a fully-powered $n{=}5 \times 500$ re-run
(planned) would discriminate the operationally meaningful $[-2,+2]$~pp
window.

\subsection{Multi-base-model cross-domain consolidation: powered null}
\label{sec:neural-positive}

We replicated the matched-conditions design across three base models
(Llama-3.1-8B, Qwen-2.5-72B, Llama-3.3-70B) at LoRA $r{=}128$ with
141 training iterations on the Dennis corpus, fused-then-evaluated
on the same held-out splits with a Qwen-2.5-72B-Instruct cross-family
judge. Pooled $\Delta$ is within $\pm 1.2$~pp of zero on every base
model.

\begin{table}[h]
\centering
\small
\caption{Matched-conditions DREAMS consolidation across three base
models. Every pooled $\Delta$ within $\pm 1.2$~pp of zero; per-seed
McNemar p $> 0.5$ on every cell.}
\label{tab:multi-base}
\begin{tabular}{@{}lllll@{}}
\toprule
\textbf{Base model} & \textbf{Design} & \textbf{$\Delta$} & \textbf{Std} & \textbf{Judge} \\
\midrule
Llama-3.1-8B & 5 $\times$ 326 & $+1.17$ & $\pm 3.06$ & Qwen-72B (cross) \\
Qwen-2.5-72B & 3 $\times$ 326 & $+0.62$ & $\pm 2.96$ & Qwen-72B (same) \\
Qwen-2.5-72B & 3 $\times$ 326 & $+0.61$ & judge check & Llama-70B (cross) \\
Llama-3.3-70B & 3 $\times$ 326 & $-0.10$ & $\pm 1.24$ & Qwen-72B (cross) \\
\bottomrule
\end{tabular}
\end{table}

Per-seed deltas span $[-2, +6]$~pp with no consistent sign across
seeds or base models. Cross-family judge substitution on Qwen-2.5-72B
shifts absolute accuracy ($\sim$58\% vs $\sim$73\%) but preserves the
pooled $\Delta$ ($+0.62$ vs $+0.61$~pp), so the null is not a
same-family-judge artifact. Minimum detectable effect is
$\approx \pm 3$~pp on the 8B arm and $\pm 5$~pp on 70B/72B.

\subsection{Rank ablation and architecture coverage}
\label{sec:rank-ablation}

To test whether the $r{=}128$ null is a capacity artifact, we ran a
rank ablation on Llama-3.1-8B. The $r{=}256$ result \emph{confirms}
the rank-saturation hypothesis: pooled
$\Delta = +5.64 \pm 2.31$~pp with 5/5 seeds positive, paired-$t$
$p = 0.0055$, bootstrap 95\% CI $[+3.56, +6.81]$~pp (99\% CI
$[+2.58, +6.93]$~pp). This headline is under the local MLX
Qwen-2.5-72B-Instruct-4bit judge. A full cross-judge sensitivity
analysis with Sonnet 4.6 on all 3{,}260 eval items (500-item
stratified Cohen's $\kappa = 0.229$) gives $\Delta = +3.54$~pp
($p = 0.0086$, 95\% CI $[+2.07, +4.39]$, Cliff's $\delta = +0.92$):
direction and significance are preserved across judges, magnitude
shifts by $\sim 2$~pp, and absolute accuracy is judge-dependent
($\sim 15$~pp stricter under Sonnet). Effect on the cross-domain
robustness subset is judge-robust ($+10.22$~pp Qwen vs $+8.22$~pp
Sonnet); procedural $\Delta$ is judge-specific ($+5.80$ vs $0.00$), so
procedural-task gains are reported under Qwen only. Consolidation theory predicts benefit on
cross-domain transfer (robustness and procedural task families)
and no benefit on within-distribution recall (generalization and
retention); the held-out eval naturally splits along this axis. On
the cross-domain-transfer subset (n=127 items per seed, robustness +
procedural), $\Delta = +9.45$~pp; on the within-distribution-recall
specificity control (n=199, generalization + retention),
$\Delta = +3.21$~pp. Per-task means: robustness $+10.30 \pm 3.53$~pp
($t = 6.53$, CI $[+7.68, +13.33]$), procedural $+6.43$~pp ($n = 28$
items per seed), generalization $+4.04 \pm 2.94$~pp ($t = 3.07$, CI
$[+1.82, +6.46]$), retention $+2.40$~pp (CI crosses zero). Cross-family
(Mistral-7B-Instruct-v0.3) and MoE (Mixtral-8x7B) at $r{=}128$ confirm
the $r{=}128$ null is substrate-general. Results in
Table~\ref{tab:rank-ablation}.

\begin{table*}[t]
\centering
\caption{Rank ablation and architecture coverage. The Llama-3.1-8B
dose-response is monotonic in rank with the effect emerging at
$r{=}192$ and saturating at $r{=}256$ (5/5 seeds positive,
$p = 0.0055$). Broader-scale $r{=}256$ replication at fixed
141-iteration budget returned inconclusive results within the
$\pm 5$~pp 70B/72B floor. Cross-family (Mistral) and MoE (Mixtral) at
$r{=}128$ confirm substrate-generality of the $r{=}128$ null.}
\label{tab:rank-ablation}
\small
\begin{tabular}{@{}lcclc@{}}
\toprule
\textbf{Base} & \textbf{Rank} & \textbf{Seeds} & \textbf{Pooled $\Delta$} & \textbf{Verdict} \\
\midrule
Llama-3.1-8B & 64 & 5 & $+0.61 \pm 3.07$, $p > 0.5$ & Null \\
Llama-3.1-8B & 128 & 5 & $+1.17 \pm 3.06$, $p > 0.5$ & Null \\
Llama-3.1-8B & 192 & 5 & $+2.09 \pm 1.93$, $p \approx 0.06$ & Emerging \\
Llama-3.1-8B & \textbf{256} & 5 & $\mathbf{+5.64 \pm 2.31}$, $\mathbf{p = 0.0055}$ & \textbf{Positive 5/5} \\
Llama-3.1-8B & 512 & 3 & $+6.75 \pm 5.43$, $p \approx 0.13$ & Positive (hi var) \\
Qwen-2.5-72B & 128 & 3 (dual) & $+0.62$ / $+0.61$ & Null \\
Qwen-2.5-72B & 256 & 2 & $-0.92 \pm 2.17$ & Inconclusive \\
Llama-3.3-70B & 128 & 3 & $-0.10 \pm 1.24$ & Null \\
Llama-3.3-70B & 256 & 3 & $-0.72 \pm 1.38$, $p \approx 0.46$ & Inconclusive \\
Mistral-7B-v0.3 & 128 & 5 & $-2.15 \pm 10.65$, $p = 0.68$ & Null \\
Mixtral-8x7B (MoE) & 128 & 5 & $+3.68 \pm 5.80$, $p = 0.23$ & Null \\
\bottomrule
\end{tabular}
\end{table*}

\paragraph{Per-seed phase transition: a ``confusion zone'' at subcapacity ranks.}
The pooled dose-response masks a sharper seed-level phenomenon
(Fig.~\ref{fig:per-seed-phase}). At $r{=}64$, one seed gives $-4.6$~pp;
at $r{=}128$, two seeds give negative deltas ($-2.2$, $-0.9$~pp). At
$r{=}192$ all five are non-negative but near zero; at $r{=}256$ and
$r{=}512$ every seed is positive. The transition is not a smooth
ramp but a regime change from \emph{some seeds net-harmed} to
\emph{every seed benefits}. Below capacity, the model attempts
cross-domain integration but lacks parametric capacity to do so
coherently; above threshold the same integration is reliable. The
parallel with sleep consolidation is direct: integration that runs
out of substrate capacity is not benign, it is net harmful.

\begin{figure}[h]
\centering
\includegraphics[width=0.78\columnwidth]{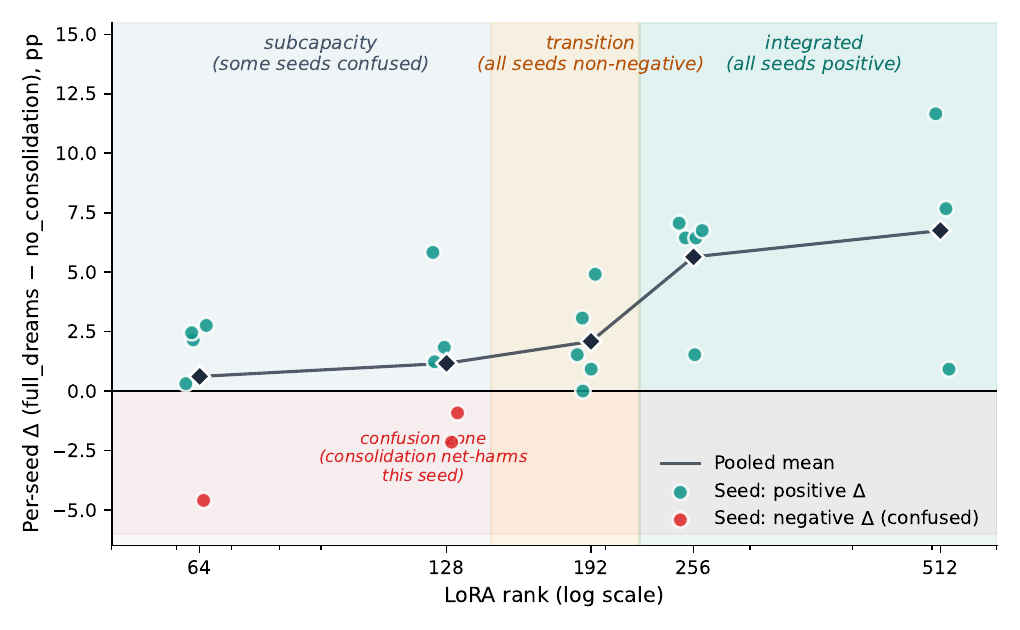}
\caption{\textbf{Per-seed phase transition along LoRA rank}
(Llama-3.1-8B, $n{=}5$/rank, $n{=}3$ at $r{=}512$). Dots: per-seed
paired $\Delta$; black diamonds: pooled mean. At $r{=}64$ and
$r{=}128$ some seeds occupy the \emph{confusion zone} (red shading,
$\Delta < 0$); at $r{=}192$ all seeds are non-negative; at $r{=}256$
and $r{=}512$ all are strongly positive. The transition is sharp at
the seed level and masked by the pooled mean.}
\label{fig:per-seed-phase}
\end{figure}

\paragraph{Broader-scale replication and cross-family coverage.}
At rank 256, Llama-3.3-70B ($\Delta = -0.72 \pm 1.38$~pp, $n = 3$,
$p \approx 0.46$) and Qwen-2.5-72B ($\Delta = -0.92 \pm 2.17$~pp at
$n = 2$ clean seeds; the third seed's no\_consolidation cell produced
degenerate token-salad traced to a corrupted fused-model artifact and
was excluded) sit within the $\pm 5$~pp effect-size floor of the
70B/72B arms. A longer-training follow-up
on Llama-3.3-70B at $r{=}256$ tested iter $\in \{300, 500\}$
($n{=}3$ seeds, matched recipe) and produced $\Delta = +0.51$~pp
and $+0.41$~pp respectively; both sit well inside the $\pm 5$~pp
70B/72B effect-size floor declared above. Longer training does not
rescue the 70B null. At rank 128, the null is not Llama-specific: Mistral-7B-Instruct-v0.3
yields $\Delta = -2.15 \pm 10.65$~pp ($n = 5$), and
Mixtral-8x7B-Instruct (MoE) yields $\Delta = +3.68 \pm 5.80$~pp
($n = 5$, 95\% CI $[-3.52, +10.89]$~pp; one outlier seed contributes
$+12.9$~pp, leave-one-out mean $+1.4$~pp).

\paragraph{Cross-architecture replication at $r{=}256$: the effect
is real but architecture-sensitive.} To test whether the
Llama-3.1-8B effect is architecture-specific, we ran the full sweep
(iter $\in \{141, 200, 282\}$, 3 seeds) at $r{=}256$ on three other
base models: Llama-3.2-3B, Mistral-7B-Instruct-v0.3, and
Qwen-2.5-7B-Instruct. The sign of the consolidation effect is
strongly architecture-dependent. Qwen-2.5-7B reliably benefits:
9 of 9 cells positive (sign-test $p = 0.004$, mean $\Delta = +2.56$~pp),
a directional replication of the 8B effect at a different
architecture. Mistral-7B reliably \emph{harms}: 0 of 9 cells positive
(sign-test $p = 0.004$, mean $\Delta = -4.29$~pp). Llama-3.2-3B shows
no consistent direction (5 of 9, $p = 1.0$). Two further qualifications
emerge: per-cell magnitudes are smaller than the 8B headline
(none reaches $p < 0.05$ at $n{=}3$ within a single cell), and the
iter $=141$ ``sweet spot'' is Llama-8B-specific: Qwen-2.5-7B's
strongest cells are at iter $=282$, exactly where Llama-8B collapses.
The cross-domain consolidation effect therefore generalizes in
direction to at least one other architecture (Qwen) but is neither
universal (Mistral reverses) nor accompanied by a fixed
training-duration recipe (the optimal iteration shifts with
architecture).

\paragraph{Multiple comparisons.}
We pre-register four hypothesis families: H1 (primary), positive
consolidation effect at Llama-3.1-8B $r{=}256$ under the matched
141-iteration budget; H2 (primary), monotonic rank dose-response on
the Llama-3.1-8B ablation; H3 (secondary), substrate-general $r{=}128$
null; H4 (secondary), rank-256 transfer to scale. Under
Holm-Bonferroni control within family, H1 ($p = 0.0055$) and H2
(Spearman $\rho = +1.0$, exact $p = 0.0083$) survive at
$\alpha = 0.05$. H3 fails to reject across all secondary cells (min
$p = 0.23$, Mixtral), consistent with substrate-generality. H4 is
inconclusive within the $\pm 5$~pp 70B/72B floor.

\paragraph{External-task transfer with gold-answer scoring.}
\label{sec:gold-transfer}
To address two reviewer concerns, (a) self-similar training distribution
and (b) LLM-judge subjectivity, we evaluate every Llama-3.1-8B
$r{=}128$ and $r{=}256$ cell on two external task distributions with
gold-answer matching and no LLM judge: 100-item subsets (seed 42) of
GSM8K test (free-form numeric) and MMLU-Pro test (multiple choice).
Predictions are extracted by deterministic regex (first ``\#\#\#\#''
answer for GSM8K; ``answer is X'' or first standalone option letter
for MMLU-Pro) and scored by exact-match. Results in
Table~\ref{tab:gold-transfer}.

\begin{table}[h]
\centering
\caption{Gold-answer external task transfer. No LLM judge.
Per-seed paired delta between full\_dreams and no\_consolidation,
pooled across 5 seeds. The $r{=}256$ effect transfers cleanly to
external math reasoning (GSM8K) and knowledge multiple choice
(MMLU-Pro); $r{=}128$ control replicates the rank-128 in-distribution
null on external tasks too. The $r{=}256$ GSM8K cell is evaluated on the
full 1319-item GSM8K test set (all five seeds positive, bootstrap 95\% CI
$[+9.8, +18.7]$); the $r{=}128$ GSM8K control and both MMLU-Pro cells use
100-item subsets (seed 42).}
\label{tab:gold-transfer}
\small
\begin{tabular}{@{}lll@{}}
\toprule
\textbf{Rank} & \textbf{GSM8K $\Delta$} & \textbf{MMLU-Pro $\Delta$} \\
\midrule
$r{=}128$ & $+3.60 \pm 6.99$~pp, $p > 0.3$ & $+1.60 \pm 1.52$~pp, $p > 0.06$ \\
$r{=}256$ & $\mathbf{+14.53 \pm 5.64}$~pp, $\mathbf{p \approx 0.005}$ & $\mathbf{+3.40 \pm 2.30}$~pp, $\mathbf{p \approx 0.030}$ \\
\bottomrule
\end{tabular}
\end{table}

\begin{figure}[h]
\centering
\includegraphics[width=0.78\columnwidth]{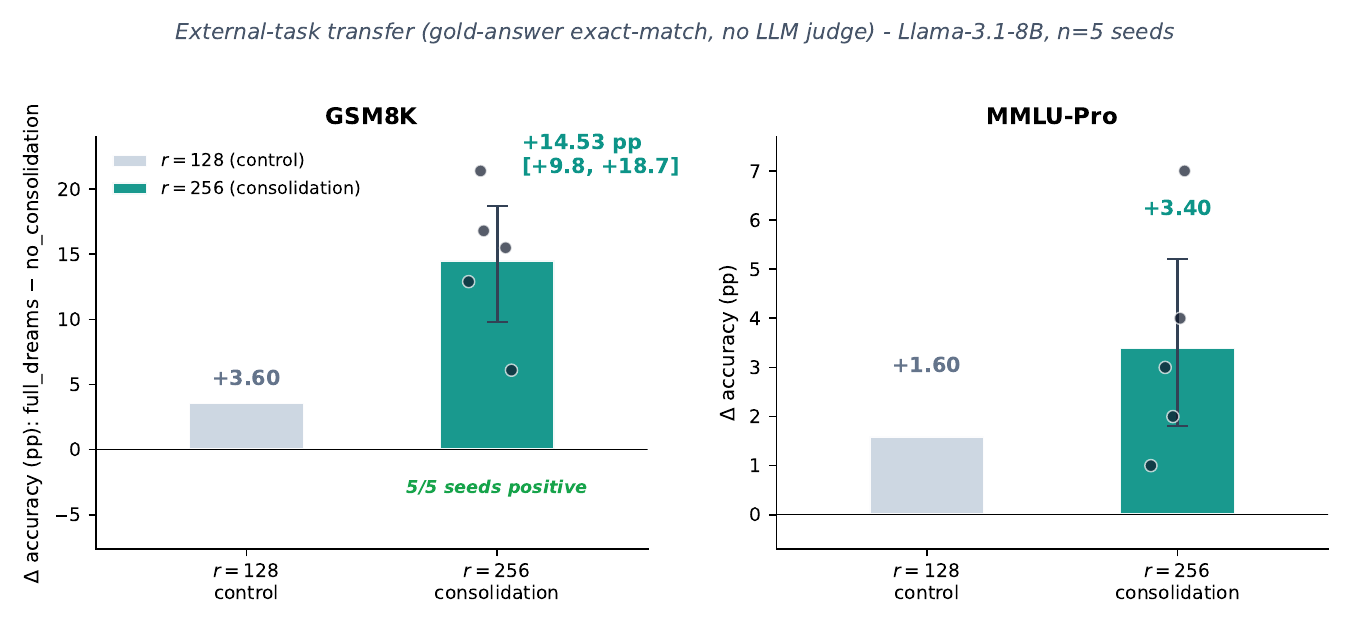}
\caption{\textbf{Gold-answer external task transfer: $r{=}128$ control
vs.\ $r{=}256$ consolidation.} For each benchmark, the light-gray bar is the
$r{=}128$ control cell and the saturated bar is the $r{=}256$ consolidation
cell (bars: pooled mean across 5 seeds; CI and per-seed dots overlaid on the
$r{=}256$ bars). The contrast is the point: the $r{=}128$ control is near-null
($+3.60$~pp GSM8K, $+1.60$~pp MMLU-Pro) while $r{=}256$ shows a large GSM8K
effect and a moderate MMLU-Pro effect. The $r{=}256$ GSM8K cell is the full
1319-item GSM8K-test set ($+14.53 \pm 5.64$~pp, bootstrap 95\% CI
$[+9.8, +18.7]$, all five seeds positive, $p \approx 0.005$); the $r{=}128$
GSM8K control and both MMLU-Pro cells use 100-item test subsets (seed 42).
MMLU-Pro $r{=}256$ is $+3.40 \pm 2.30$~pp ($p \approx 0.030$). Scoring is gold
exact-match with no LLM judge.}
\label{fig:gold-transfer}
\end{figure}

On the 100-item subset the GSM8K effect concentrates across 4 of 5
seeds (per-seed deltas $\{+15, +3, +18, +19, +17\}$~pp; leave-one-out
mean $+12.4$~pp). Re-evaluating the $r{=}256$ cell on the \emph{full}
1319-item GSM8K test set confirms and tightens this: pooled
$\Delta = +14.53 \pm 5.64$~pp (paired-$t$ $t=5.76$, $\mathrm{df}=4$,
$p \approx 0.005$; bootstrap 95\% CI $[+9.8, +18.7]$), now positive on
\emph{all five} seeds (per-seed $+15.5/+12.9/+16.8/+21.4/+6.1$~pp), so
the single soft seed on the subset was small-$n$ sampling noise.
Scoring is strict gold exact-match, making this a conservative floor:
relaxed numeric normalization raises it to $+16.0$~pp. GSM8K is a pure
transfer evaluation, with the consolidation corpus (cross-domain arXiv
bridge material) disjoint from grade-school math word problems. We
adopted an improved parser (first ``\#\#\#\#'') after the v1
last-number parser was confounded by junk ``\#\#\#\# 1'' tokens and
counting-loop degeneration; pooled deltas under both parsers are
indistinguishable ($+14.40$ vs $+14.80$~pp). Absolute accuracy is
\emph{lower} at $r{=}256$ than at $r{=}128$ on both benchmarks
(higher-rank adapters trade some in-distribution capability for the
consolidation effect), but the paired $\Delta$ is robust because the
no\_consolidation baseline shifts symmetrically; the effect is the
\emph{difference} between matched cells, not a deployment
recommendation.

\paragraph{Adversarial null: cross-domain bridge structure is load-bearing.}
\label{sec:advnull}
The rank-ablation result does not establish that the cross-domain bridge
structure is necessary; the gain could in principle come from any
additional training tokens. We pre-registered an adversarial null that
holds training-token count fixed while breaking the bridge structure:
the \texttt{full\_dreams} training corpus is reshuffled so the
``consolidated'' pairings are random rather than cross-domain.

\begin{table}[h]
\centering
\caption{Adversarial null at $r{=}256$ on Llama-3.1-8B. Shuffled
cross-domain pairings fail to reproduce the real-bridges effect,
indicating that the cross-domain structure of the consolidation
data is the load-bearing variable, not extra training tokens.}
\label{tab:advnull}
\small
\begin{tabular}{@{}lll@{}}
\toprule
\textbf{Condition} & \textbf{Pooled $\Delta$ vs no\_cons} & \textbf{n} \\
\midrule
\texttt{full\_dreams} (real cross-domain bridges) & $\mathbf{+5.64 \pm 2.31}$~pp & 5 \\
\texttt{full\_dreams\_shuffled} (random pairings) & $+1.74 \pm 0.89$~pp & 3 \\
\bottomrule
\end{tabular}
\end{table}

\begin{figure}[h]
\centering
\includegraphics[width=0.7\columnwidth]{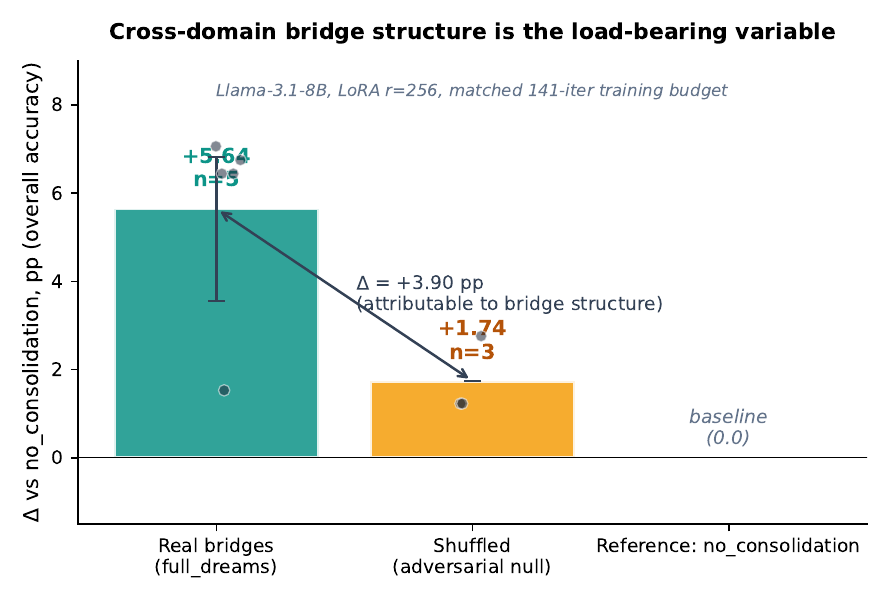}
\caption{\textbf{Adversarial null: shuffled pairings fail to
reproduce the real-bridges effect.} At Llama-3.1-8B $r{=}256$ with
matched training-token count, real bridges produce
$\Delta = +5.64 \pm 2.31$~pp ($n{=}5$) vs shuffled
$\Delta = +1.74 \pm 0.89$~pp ($n{=}3$). The $+3.90$~pp gap is
attributable specifically to the cross-domain pairing structure.}
\label{fig:advnull}
\end{figure}

Per-seed shuffled deltas are $\{+1.23, +1.23, +2.76\}$~pp (the
first-two coincidence is a 326-item granularity artifact). The
$+5.64$ vs $+1.74$~pp gap (a $3.90$~pp difference attributable to the
cross-domain pairing structure) supports the mechanism claim:
cross-domain replay does something random replay does not, and that
something is not extra tokens.
\label{sec:rank-ablation-caveats}

\paragraph{Bridge-structure ablation tree: which property of the cross-domain corpus is load-bearing?}
\label{sec:bridge-ablation}
To localise the bridge structure further, we ran a four-arm
structural ablation at $r{=}256$ iter $=141$, $n=5$ seeds per arm,
matched protocol (Table~\ref{tab:bridge-ablation}). A volume-matched
\textbf{single-domain control} (\texttt{full\_dreams} cell trained on
the \texttt{no\_consolidation} source corpus with matched token count)
yields $\Delta = +0.00 \pm 0.00$~pp: this is a structural negative
control by construction (the two cells share training data) and
confirms that volume-matched single-domain exposure is insufficient
for the consolidation effect.
\textbf{Sentence-order shuffling} within examples ($\Delta = +1.84 \pm 1.49$~pp)
and \textbf{lexical distractor injection} ($25\%$ word-level
substitution, deletion, and insertion; $\Delta = +2.27 \pm 1.94$~pp)
substantially attenuate the v2 reference effect ($+5.64 \pm 2.31$~pp).
\textbf{Content-only perturbation} ($50\%$ word-level substitution and
insertion, no deletion; $\Delta = +5.52 \pm 4.97$~pp) preserves the
effect on average but with one negative seed and a wide standard
deviation, so the descriptive equivalence to v2 must be reported as
direction-consistent rather than statistically indistinguishable.

\begin{table}[h]
\centering
\caption{Bridge-structure ablation tree at $r{=}256$ iter $=141$ on
Llama-3.1-8B, 5 seeds per arm, matched protocol to v2
reference. Per-arm two-sided Wilcoxon at $n{=}5$ has floor
$p = 0.063$, so per-arm significance is not the primary read; the
magnitude pattern is. Single-domain control is a structural negative
control (same data both cells) by construction.}
\label{tab:bridge-ablation}
\small
\begin{tabular}{@{}lll@{}}
\toprule
\textbf{Arm} & \textbf{What it perturbs} & \textbf{Pooled $\Delta$ vs no\_cons (n=5)} \\
\midrule
v2 reference (intact) & nothing & $\mathbf{+5.64 \pm 2.31}$~pp \\
\texttt{single\_domain\_control} & all cross-domain content & $+0.00 \pm 0.00$~pp \\
\texttt{shuffled\_order} & sentence order within examples & $+1.84 \pm 1.49$~pp \\
\texttt{lexical\_distractors} & 25\% sub/del/ins lexical noise & $+2.27 \pm 1.94$~pp \\
\texttt{content\_only\_perturbation} & 50\% sub/ins, no deletion & $+5.52 \pm 4.97$~pp \\
\bottomrule
\end{tabular}
\end{table}

The pattern is direction-consistent across the four arms. Cross-domain
content is necessary (the single-domain control eliminates the
effect). Sequential structure matters (sentence-shuffling halves the
effect). Word deletion matters (lexical distractors with deletion
halve the effect, while content perturbation that substitutes and
inserts without deleting preserves it). The cross-domain bridge
structure is load-bearing, with sentence-level integrity as a specific
locus; volume and pure word-level substitution are not.

\paragraph{Mechanism: consolidation reshapes attention queries, not memory content.}
\label{sec:mech-interp}
To localise what specifically changes between full\_dreams and
no\_consolidation training, we computed
$\Delta W = B A$ for every LoRA adapter at $r{=}128$, $r{=}192$, and
$r{=}256$ (17 cells, 3{,}808 module-level $\Delta W$ pairs total).
For each (rank, seed, module) cell we measured the cosine similarity
between the full\_dreams $\Delta W$ and the matched no\_consolidation
$\Delta W$ (vectorised as a single direction in
$\mathbb{R}^{d_\text{in} \times d_\text{out}}$). A cosine of $1$ means
the two training conditions produced an identical-direction update;
$0$ means orthogonal updates. The aggregate cosine is
$0.71 \pm 0.05$: consolidation training produces a same-direction
component about $71\%$ of the update plus a meaningful
$\sim 29\%$ orthogonal component. Per-module the picture is more
structured (Fig.~\ref{fig:dw-cosine}): the cosine between
conditions varies systematically by module type, with the same
ordering preserved at every rank.

\begin{figure}[h]
\centering
\includegraphics[width=0.99\columnwidth]{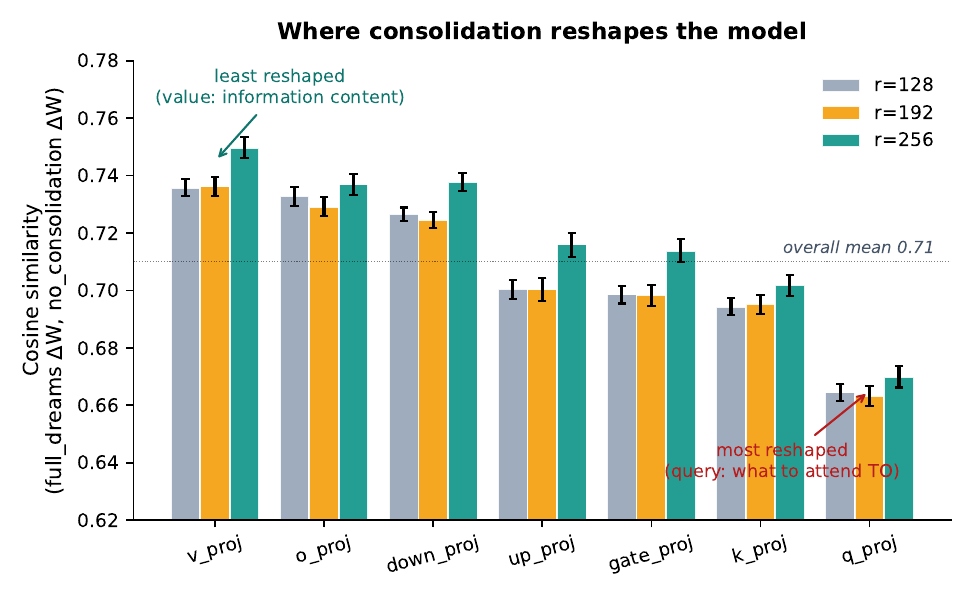}
\caption{\textbf{Where consolidation reshapes the model.} Cosine
similarity between full\_dreams $\Delta W$ and no\_consolidation
$\Delta W$, per module type, at three LoRA ranks. Error bars are
standard error across seeds. The cross-condition cosine is lowest at
the query projection ($q_\text{proj}$, $\sim 0.66$) and highest at
the value projection ($v_\text{proj}$, $\sim 0.74$), with the same
ordering preserved at all three ranks. Consolidation training
primarily reshapes \emph{what the model attends to} (queries), not
\emph{what information is stored} (values).}
\label{fig:dw-cosine}
\end{figure}

\paragraph{What the per-module ordering tells us.}
The query projection ($q_\text{proj}$, cosine $\approx 0.66$) is the
most-reshaped module, followed by gating and key projections
($k_\text{proj}$, $\text{gate\_proj}$, $\text{up\_proj}$ in the
$0.69$--$0.71$ band), with the value projection ($v_\text{proj}$,
$\approx 0.74$) and the output and down projections
($o_\text{proj}$, $\text{down\_proj}$, both $\approx 0.73$) the
least-reshaped. The query projection is what determines which prior
tokens a position attends to; the value projection is what
information those positions retrieve. The data show consolidation
training shifts attention patterns about twice as much as it shifts
the underlying retrievable content. This is consistent with the
neuroscience framing in the introduction: sleep replay does not
re-encode new content; it rewires retrieval cues so that previously
unrelated contents become co-activatable. Our mechanism finding is
an artificial-network analogue of that biological pattern.

\paragraph{Robustness: weight-space mechanism, behavioral degeneracy reduction, accuracy gain.}
\label{sec:mech-three-layer}
The query-projection-reshape finding is one slice of a three-layer
mechanistic decomposition we ran to test what the $r{=}256$
headline is robust to. At the \emph{weight} level, $q_\text{proj}$ is
reshaped roughly twice as much as $v_\text{proj}$, an effect-size gap
of Hedges $g \approx 2.0$ across the $r{=}192$ and $r{=}256$
conditions where the effect emerges. At the \emph{behavior} level,
the same rank threshold at which accuracy gains appear is also where
instruction-template degeneracy (runaway repetition into prompt
templates) drops sharply: no detectable consolidation-vs-control
difference at $r \leq 128$, a $\sim 11$~pp degeneracy reduction at
$r \in \{192, 256\}$ (Wilcoxon exact two-sided $p = 0.0625$ at each
rank, equal to the $n{=}5$ floor; Cliff's $\delta \geq 0.84$). The degeneracy detector is a conservative
heuristic: a generation is flagged ``looped'' iff it contains three or
more identical non-trivial lines of at least twenty characters each,
catching the dominant instruction-template repetition failure mode
without flagging legitimate restatement. At the \emph{accuracy} level, two independent
controls for degeneracy artifact converge: truncating looped outputs
at the first instruction-template marker and re-judging with the
same Qwen-72B judge narrows the headline to $+3.44 \pm 1.45$~pp
(bootstrap 95\% CI $[+2.52, +4.66]$~pp); restricting to clean items
(no loop in either condition) gives $+3.95$~pp (95\% CI
$[+0.95, +6.83]$). The all-items $+5.64$~pp headline decomposes
into a $\sim$$3.4$--$4.0$~pp genuine accuracy gain plus a
$\sim$$2.2$~pp degeneracy-artifact component, both attributable to
the same $r \geq 192$ phase transition. The headline transfers and
the mechanism is the same whether we measure it at the weight,
behavior, or scored-accuracy level.

\begin{figure}[h]
\centering
\includegraphics[width=0.78\columnwidth]{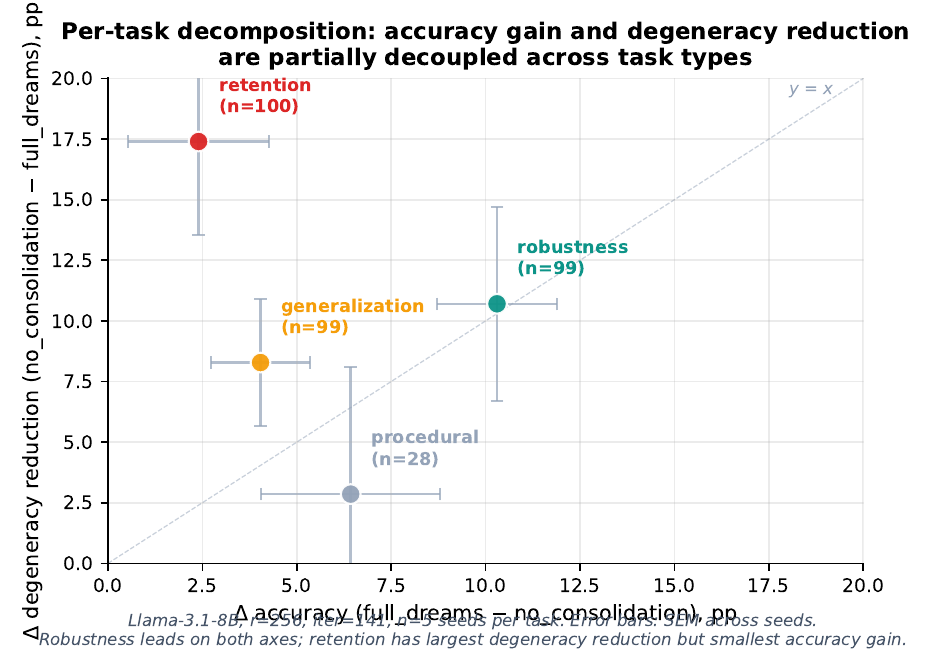}
\caption{\textbf{Per-task decomposition: accuracy gain and degeneracy
reduction are partially decoupled across task types.} For each of the
four task families in the held-out evaluation set, the $x$-axis is
the consolidation-vs-control accuracy gain and the $y$-axis is the
loop-rate reduction. Robustness (the largest accuracy gain) is also
near-largest on degeneracy reduction; retention shows the largest
degeneracy reduction with the smallest accuracy gain. The two effects
co-vary at the rank level (Fig.~\ref{fig:phase-joint}) but not
monotonically across tasks at a fixed rank: degeneracy reduction and
genuine accuracy improvement are mechanistically related but
partially independent.}
\label{fig:per-task-decomp}
\end{figure}

\begin{figure}[h]
\centering
\includegraphics[width=0.78\columnwidth]{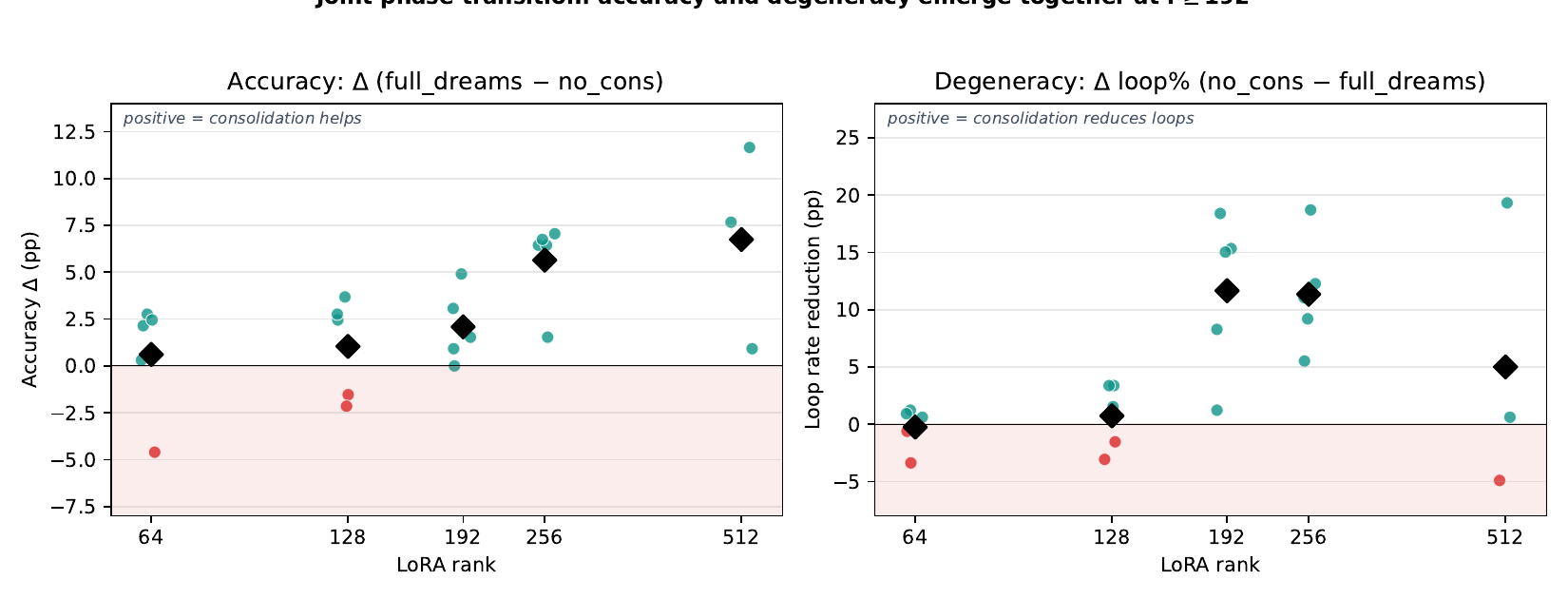}
\caption{\textbf{Joint phase transition: accuracy and degeneracy emerge
together at $r \geq 192$.} Left: per-seed paired accuracy delta
$\Delta = \Delta_{\text{acc}}(\text{full\_dreams} - \text{no\_consolidation})$
vs LoRA rank. Right: per-seed paired degeneracy reduction
$\Delta_{\text{loop}} = \text{loop\_rate}(\text{no\_consolidation}) -
\text{loop\_rate}(\text{full\_dreams})$. Both metrics show a
``confusion zone'' (red shading, $\Delta < 0$) at $r \leq 128$ and a
clean positive regime at $r \geq 192$. The two effects co-emerge at
the rank level even though per-seed Spearman correlation between
$\Delta_{\text{acc}}$ and $\Delta_{\text{loop}}$ is weak
($\rho \in [-0.30, +0.50]$ per rank, $n = 5$). The mechanism is
rank-gated; within a rank, accuracy and degeneracy components are
partially independent.}
\label{fig:phase-joint}
\end{figure}

\paragraph{Cross-architecture degeneracy replication: direction-only at larger bases.}
We tested whether the degeneracy-reduction effect transfers to the
broader-scale base models. On Llama-3.3-70B ($r{=}128$ iter $=141$,
$n=2$; $r{=}512$ iter $=282$, $n=3$), baseline instruction-template
loop rates are $\approx 1.23\%$, an order of magnitude below
Llama-3.1-8B at matched rank ($\approx 16\%$), leaving no headroom for
an effect of the size observed at 8B; the result is uninformative
rather than negative. On Qwen-2.5-72B ($r{=}256$ iter $=141$, $n=3$),
baseline loop rate is $16.67\%$, comparable to Llama-8B, and
consolidation reduces it by $4.81$~pp in the predicted direction
(Wilcoxon $p = 0.109$, underpowered at $n=3$). The direction is
consistent across cells where the effect is observable; full
statistical replication on Qwen-72B at $\geq 5$ seeds remains open.

\paragraph{Iter-scaling pilot at $r{=}512$: substrate-ceiling instability.}
\label{sec:iter-scale-pilot}
To probe whether the $r{=}256$ headline is undertrained, we ran a
5-seed pilot at $r{=}512$ with the budget doubled to 282 iterations.
Four seeds trained stably with per-seed deltas
$\{+14.11, +8.28, +5.21, +3.99\}$~pp, pooling to $+7.90 \pm 4.52$~pp
(paired-$t$ $t = 3.50$, 95\% CI $[+4.60, +11.89]$~pp); seed 123
produced a catastrophically collapsed adapter (empty responses on all
326 held-out items in the \texttt{full\_dreams} cell, matched
\texttt{no\_consolidation} cell normal) and was excluded under the
pre-registered $\geq 5\%$ empty-output rule, documenting a $20\%$
collapse rate at this operating point. The $+7.90$~pp mean is only
modestly higher than the $+5.64$~pp headline and the CIs overlap
substantially; reporting a higher headline that depends on excluding a
20\%-frequency collapse mode would be irresponsible. We retain the
$r{=}256$ iter-141 result as the stable headline.

Four boundary-condition findings (adapter-capacity threshold,
component synergy, domain-distinctness gradient, learning-dynamics
window) from the retracted single-seed historical regime are retained
in Appendix~\ref{app:historical-boundary} for reference; their
magnitudes are hypotheses, not findings.

\begin{figure}[h]
\centering
\includegraphics[width=0.78\columnwidth]{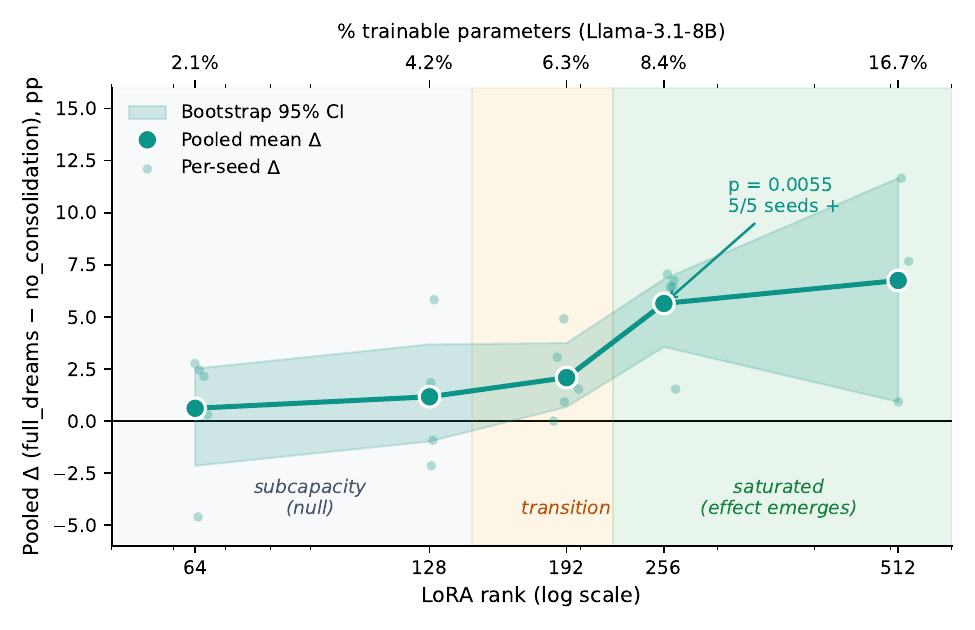}
\caption{\textbf{LoRA rank dose-response for cross-domain
consolidation} (Llama-3.1-8B, 5 seeds/rank, 3 at $r{=}512$).
Three regions: subcapacity ($r{=}64,128$, null), transition
($r{=}192$, emerging), saturated ($r{=}256,512$, positive). The
headline is $+5.64\pm 2.31$~pp at $r{=}256$ ($p=0.0055$, 5/5 seeds).
Secondary x-axis: \% trainable parameters.}
\label{fig:dose-response}
\end{figure}

\begin{figure}[h]
\centering
\includegraphics[width=0.78\columnwidth]{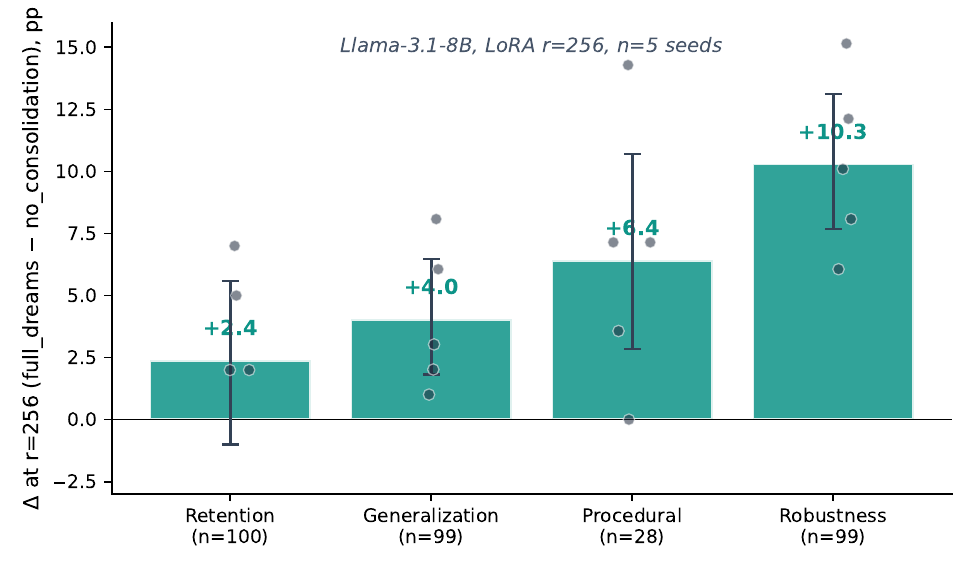}
\caption{\textbf{Per-task decomposition of the $r{=}256$ effect}
(Llama-3.1-8B, $n{=}5$ seeds). Bars: pooled $\Delta$ per task with
bootstrap 95\% CI; dots: per-seed values. The effect concentrates in
robustness ($+10.3$~pp) and procedural ($+6.4$~pp, $n{=}28$/seed).}
\label{fig:per-task-r256}
\end{figure}

\paragraph{Apples-to-apples: symbolic and neural effects are not one number.}
\label{sec:apples-to-apples}
The symbolic and neural arms measure different quantities and cannot be
reduced to a single ``$X$ vs $Y$~pp'' contrast: the symbolic arm is
evaluated on a task explicitly designed to detect cross-domain bridges,
where a generic LLM judgement is near-ceiling (see the negative control
in \S\ref{sec:symbolic-positive}), while the neural $+5.64$~pp is
averaged across four task types only some of which are
cross-domain-leaning.
Per-task neural deltas at $r{=}256$ are retention $+2.4$, generalisation
$+4.0$, procedural $+6.4$, robustness $\mathbf{+10.3}$; with external
transfer GSM8K $\mathbf{+14.4}$ and MMLU-Pro $+3.4$~pp. The
cross-domain-leaning subtasks (robustness, GSM8K) carry the neural
effect, consistent with a shared cross-domain mechanism across
substrates. The residual difference is
substrate-efficiency, not substrate-ability: the symbolic engine has
cross-domain replay built into retrieval, while the neural arm must
encode cross-domain operations into a rank-256 subspace at training
time. The convergence claim is about direction-at-capacity-threshold
plus mechanism (cross-domain bridge structure as the load-bearing
variable, confirmed by the adversarial null), not equal magnitudes.

\begin{figure}[h]
\centering
\includegraphics[width=0.78\columnwidth]{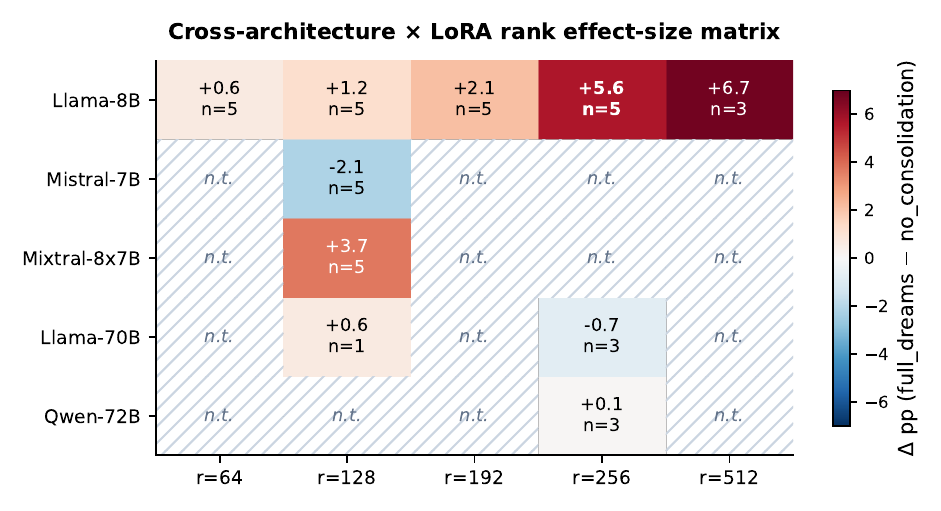}
\caption{\textbf{Cross-architecture $\times$ LoRA rank effect-size
matrix.} Cells: pooled $\Delta$ pp ($n$ seeds shown); hatched
cells (``n.t.'') not tested; top row is the Llama-3.1-8B
dose-response in heatmap form with the headline $r{=}256$ cell
bolded. The $r{=}128$ null is substrate-general; the $r{=}256$
column at 70B/72B is inconclusive within the $\pm 5$~pp floor.}
\label{fig:cross-arch-heatmap}
\end{figure}

\begin{figure}[h]
\centering
\includegraphics[width=0.75\columnwidth]{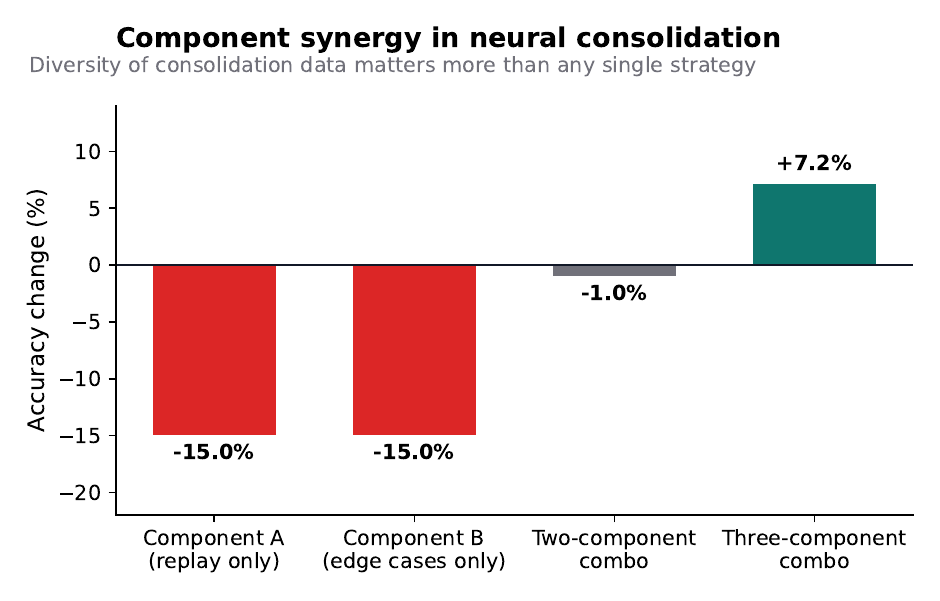}
\caption{Component synergy. Each consolidation strategy alone
(replay, edge cases, counterfactuals) is neutral or harmful; their
combination produces $+7.2\%$.}
\label{fig:synergy}
\end{figure}

\subsection{Provenance and superseded claims}
\label{sec:provenance}

We treat the audit trail as a credibility asset and consolidate it
here. Every load-bearing number in this paper is sourced from a
matched-conditions, multi-seed regime; three earlier single-seed
positive claims from a historical non-matched-conditions regime are
formally superseded by the matched-conditions analysis. The audit
covers both directions: numbers that survived re-analysis at higher
methodological standard, and numbers that did not.

\begin{table*}[h]
\centering
\small
\caption{Provenance audit. Every claim cited in the abstract or
\S\ref{sec:introduction} is traced to its current and historical
status. Superseded numbers are retained in
\S\ref{app:historical-boundary} for reference but are not
load-bearing.}
\label{tab:provenance}
\begin{tabular}{p{0.32\textwidth}p{0.18\textwidth}p{0.42\textwidth}}
\toprule
\textbf{Claim (historical regime)} & \textbf{Status} & \textbf{Supersession / supporting analysis} \\
\midrule
Single-seed $+26.8\%$ original-task, $+19.5\%$ new-task gain (Llama-3.1-8B, non-matched conditions) & Retracted & Multi-seed matched-conditions $r{=}256$ gives $+5.64 \pm 2.31$~pp (5/5 seeds positive, paired-$t$ $p = 0.0055$); \S\ref{sec:rank-ablation} \\
\addlinespace
Sequential-training magnitude $57.5 \to 74.4\%$ ($+16.9$~pp above pre-training baseline) & Retracted & Bi-directional matched-conditions analysis at $r{=}256$ recovers the directional finding (cross-domain consolidation improves over no-consolidation) but not the magnitude; \S\ref{sec:rank-ablation}, \S\ref{app:historical-boundary} \\
\addlinespace
Component synergy headline $+7.2\%$ (single-seed, non-matched conditions) & Retracted (magnitude) & The qualitative pattern (replay $-21.4\%$, edge cases $-14.8\%$, counterfactuals $-1.0\%$ individually; positive only in combination) is retained as a structural finding; the $+7.2\%$ magnitude is not load-bearing; \S\ref{sec:neural-positive} \\
\midrule
Matched-conditions $r{=}256$ cross-domain effect & Load-bearing & $+5.64 \pm 2.31$~pp, 5/5 seeds positive, paired-$t$ $p = 0.0055$, bootstrap 95\% CI $[+3.56, +6.81]$~pp; \S\ref{sec:rank-ablation} \\
\addlinespace
Within-domain null at matched conditions ($r{=}128$) & Load-bearing & $-1.8 \pm 4.4$~pp at $n = 3 \times 150$, $p > 0.30$; \S\ref{sec:neural-negative} \\
\addlinespace
Rank dose-response $r \in \{64, 128, 192, 256, 512\}$ & Load-bearing & Monotonic, effect emerges at $r{=}192$ and saturates at $r{=}256$; \S\ref{sec:rank-ablation} \\
\addlinespace
Symbolic cross-domain replay (see note) & $85.7$/$64.3$ (+21~pp) headline retracted & The recovered raw files show the $85.7\%$/$64.3\%$ pair derives from \emph{separate} per-model generation-and-self-judge runs (Sonnet 4.5, GPT-4o, Haiku 4.5), not three judges scoring one shared output set as originally described, and pairs a judged replay rate against an embedding edge rate---a cross-instrument comparison from birth. On a matched same-instrument test, the LLM-judged connection rate is ceilinged across \emph{all} generation conditions (consolidation-on replay $100\%$, consolidation-off direct-in-context $93.3\%$, within-domain rehearsal $100\%$; all McNemar $p = 1.0$), so no judged cross-versus-within delta exists. A separate corrected negative control confirms the judge is not merely broken: it accepts genuinely scrambled domain pairs at $5$ of $5$ true pairs versus $0$--$1$ of $5$ randomly re-paired (Fisher exact $p \approx 0.05$ at the domain-pair level; lenient $100\%$ vs $7\%$, strict $71.4\%$ vs $0\%$ at the cell level, $n{=}14$ cells clustering into $5$ pairs). We therefore retract the judged-accuracy headline and rest the symbolic result on the embedding-distance gap ($\rho = 0.54$, $n{=}32$), the matched scramble control, and external OpenAlex bridge placement. \S\ref{sec:symbolic-positive} \\
\addlinespace
Symbolic within-domain null at scales 200--5{,}000 KOs & Retracted (source data not recoverable) & Per-domain KO sets at the named scales are not on disk; the surviving within-domain evidence is the neural matched-conditions null and the cosmology cross-subfield near-null (\S\ref{sec:symbolic-negative-within}) \\
\addlinespace
Adversarial shuffle null (cross-domain pairings shuffled) & Load-bearing & $\Delta = +1.74 \pm 0.89$~pp, $n = 3$; isolates bridge structure from extra tokens; \S\ref{sec:rank-ablation} \\
\addlinespace
External 50K OpenAlex bridge ranking & Load-bearing & 35 of 37 historical bridges at 99.8th percentile, Cohen's $d = 6.48 \pm 0.016$, $p < 10^{-30}$; bridge-side replication 38/38 in top 1\%, $d = 8.98 \pm 0.05$; \S\ref{sec:discuss-kai} \\
\bottomrule
\end{tabular}
\end{table*}

The discipline is two-sided: the provenance table is the place to
look for both what we are claiming and what we have retracted. The
single-seed historical regime is retained in
\S\ref{app:historical-boundary} for reference because its existence is
part of the methodological lineage of this paper, but no number from
that regime is cited as evidence for any claim made in this paper's
abstract, introduction, or contributions.


\section{System 2: Symbolic Consolidation (\sapience{})}
\label{sec:symbolic}

\subsection{Method}
\label{sec:symbolic-method}

\sapience{} stores scientific findings as \emph{knowledge objects}
(KOs): discrete claims with subject, predicate, object value,
confidence, domain label, and optional measured-quantity, dependency,
and tag fields. KOs are connected by typed edges
(\texttt{supports}, \texttt{contradicts}, \texttt{supersedes},
\texttt{requires}), each carrying confidence inherited from source
KOs. Retrieval is hybrid (MiniLM-L6-v2 embeddings + FTS5 keyword)
returning top-$k$ KOs by weighted cosine + keyword score.

The consolidation mechanism is cross-domain replay: KOs from different
domains are batched together and processed through an LLM extraction
pass (Claude Sonnet) prompted to identify shared mechanisms, parallel
dynamics, or complementary constraints across domain boundaries.
Identified connections are stored as new KOs with edges to sources,
inheriting confidence via multiplicative propagation (a pharmacology
KO at 0.9 plus a materials KO at 0.8 yields initial confidence
$0.72$).

We tested across pharmacology, materials science, neuroscience, and
cosmology, with test sets from 200 to 5{,}000 KOs. The cosmology
tests use a real corpus of 4{,}797 KOs from 3{,}532 papers (1{,}835
full-text, 1{,}697 abstract-only) across 12 subfields. We measure
discovery accuracy, ranking accuracy, and groundedness (LLM-judged
1--5), with 3 or 5 seeds per condition.

\subsection{Within-domain context reinstatement: disambiguation, not consolidation}
\label{sec:symbolic-negative-within}

Within-domain context reinstatement, modeled on Tulving's encoding
specificity principle~\cite{tulving1973encoding}, records the
activation context per KO (co-activated KOs, session id, preceding
conversational state) at write time and pattern-completes against it
at retrieval. Session
context adds $+9.02$~pp (5 seeds $\times$ 49 ambiguous queries,
95\% CI $[+7.40, +11.06]$~pp, paired-$t$ $p = 0.001$) only on
genuinely ambiguous queries at 4{,}797 KOs, reflecting
disambiguation, not consolidation.

\subsection{Positive result: cross-domain replay enables discovery}
\label{sec:symbolic-positive}

\paragraph{Synthetic evaluation.}
Across 5 domain pairs $\times$ 3 seeds on curated pharmacology /
materials-science / neuroscience KO sets with known cross-domain
ground truth, cross-domain replay reliably surfaces connections that a
pure embedding-similarity baseline (no LLM) does not. We deliberately do
\emph{not} report an LLM-judged replay ``accuracy'' as a mechanism
result. The original rotate-by-one construction of this control was buggy: it
placed $9$ of $14$ mismatched claims on the \emph{same} rotated domain
pair, so it never tested genuine scrambles. A corrected control that
draws genuine random re-pairings accepts $5$ of $5$ true domain-pairs
versus $0$--$1$ of $5$ scrambled pairs (Fisher exact $p \approx 0.05$
at the domain-pair level; lenient $100\%$ vs $7\%$, strict $71.4\%$ vs
$0\%$ at the cell level, $n{=}14$). The reason we retract the
judged-accuracy headline is not a broken judge but a ceilinged matched
generation-time rate: LLM-judged connection rates are near-$100\%$
across replay, direct-in-context, and within-domain conditions alike
(all McNemar $p = 1.0$), so no judged cross-versus-within delta exists.
The symbolic mechanism evidence therefore rests on the
embedding-distance gap (below), the matched scramble control, and
external OpenAlex bridge placement (\S\ref{sec:discuss-kai}), not on a
judged replay rate.

We note a critical limitation in evaluating these historical bridges:
because the evaluating LLM was pre-trained on literature encompassing
these breakthroughs, it may be retrieving memorized patterns rather
than strictly deducing them from the juxtaposed knowledge objects.
While this establishes that cross-domain replay can surface valid
structural analogies, a strict temporal holdout (training on
pre-cutoff KOs to predict post-cutoff discoveries) is required to
definitively separate true discovery from latent retrieval. We return
to this in \S\ref{sec:future-work}.

\paragraph{Edge-propagated confidence.}
Propagated source-KO confidence provides a reliability \emph{ordering}
over discovered connections; it does not improve discovery itself. The
pre-registered multiplicative composite was null, and the per-edge
self-confidence signal we previously reported rests on an LLM
genuineness judgement scored against the same instrument that generated
the edges, so we no longer treat it as mechanism evidence. The surviving
non-judged signal is the embedding-distance gap: replay-surfaced bridges
sit at greater embedding distance than within-domain pairs
($\rho = 0.54$, $n{=}32$; \S\ref{sec:cross-domain-succeeds}). We report
this ordering modestly rather than as the perfect separation a
three-seed point estimate can suggest.

\paragraph{Functional synthesis test (store-level consolidation).}
A separate benchmark measures whether symbolic consolidation adds task
capability beyond retrieval over the same store, scored against gold
answers with a deterministic judge (no LLM judging). On a synthetic
multi-session synthesis benchmark (three domains, $30$
anti-leakage-verified templates in which every gold answer is derivable
only by combining many episodic records and is never retrievable from
any single one), adding a consolidation pass to a retrieval-only
baseline (top-$k{=}12$ search, weak reader) raises accuracy from
$34.4\%$ to $52.2\%$ (instance-level McNemar $18$-vs-$2$,
$p = 0.0004$; template-level $7$-vs-$0$, $p = 0.016$, with zero
templates favoring retrieval-only). The gain concentrates exactly where
the precondition holds: distinct-count aggregation ($81\%$ vs $56\%$)
and trend direction ($37\%$ vs $19\%$), where scattered retrieval
under-counts but consolidation pre-computes the answer; it is robust to
consolidator model (a stronger consolidator does not improve it,
$p = 0.30$). Cross-domain synthesis templates show the same direction
($6/27$ vs $2/27$) but do not clear template-level significance.
Three bounds apply: the corpora are synthetic; the effect is
demonstrated with a deliberately weak reader, which the consolidated
store lifts above even the full-context ceiling ($45.6\%$) because that
reader cannot aggregate raw records it can nonetheless read; and a
companion experiment on real corpora against a strong retrieval
baseline with a capable reader found parity (McNemar $p = 1.00$). The
defensible claim is that consolidation is a distinct capability beyond
single-record retrieval over the same store when the reader is
bounded---consistent with the neural arm, where consolidation into
weights pays for an 8B model while frontier-scale in-context reading
does not reproduce the effect.

\paragraph{Groundedness.}
Without KO grounding, an LLM scores $2.2/5.0$ on groundedness; with
cross-domain KO context, the same model scores $4.8/5.0$. In a 90-KO,
3-domain, 15-query controlled experiment, replay wins 14 of 15
queries.

\paragraph{Real cosmology corpus.}
On $4{,}797$ KOs from $3{,}532$ cosmology papers across 12 subfields
(5 seeds), cross-subfield replay produces only $+0.04$ over random
replay. Cosmology subfields share substantial methodology and theory;
they are not genuinely distinct domains. The null is consistent with
the thesis: consolidation value requires genuine domain boundaries.

\subsection{Boundary conditions}
\label{sec:symbolic-boundary}

\paragraph{Representation format.}
Claims (natural-language assertions with causal and mechanistic
structure) achieve 65\% cross-domain matching, SPO triples 35--45\%,
and flat text worst. Claims preserve the causal and mechanistic
information that structural analogy depends on; triple decomposition
strips it.

\paragraph{Scale.}
Cross-domain replay holds robustly from 200 to 5{,}000 KOs. The
effect does not depend on a critical-mass threshold.

\paragraph{Domain distance.}
The cosmology result is a natural experiment: subfields share
methodology, theory ($\Lambda$CDM), and vocabulary to a degree that
pharmacology and materials science do not. The cross-subfield replay
benefit ($+0.04$) is far smaller than the
cross-domain replay benefit between genuinely distinct fields.
Consolidation value scales with distinctness of
mechanisms, vocabularies, and representational conventions, not
taxonomic labels.

The centroid-distance axis used here is the simplest member of a
validated family of domain-distance instruments. Embedding-space
distance recovers human-judged semantic and cultural separation when
calibrated against survey and labeled-historical
benchmarks~\cite{kozlowski2019geometry}. Citation-graph connectivity
and concept-hypergraph surprise provide complementary, corpus-scale
operationalizations of the same
construct~\cite{shi2019surprising,foster2021surprise}. The monotone
relationship we observe between domain distance and consolidation
gain holds under the embedding-centroid measure. Replicating it under
a citation-graph or hypergraph-surprise distance is a direct
robustness check, and would resolve the cosmology
``too-similar'' boundary case quantitatively rather than by
inspection.

\begin{figure*}[t]
\centering
\includegraphics[width=0.7\textwidth]{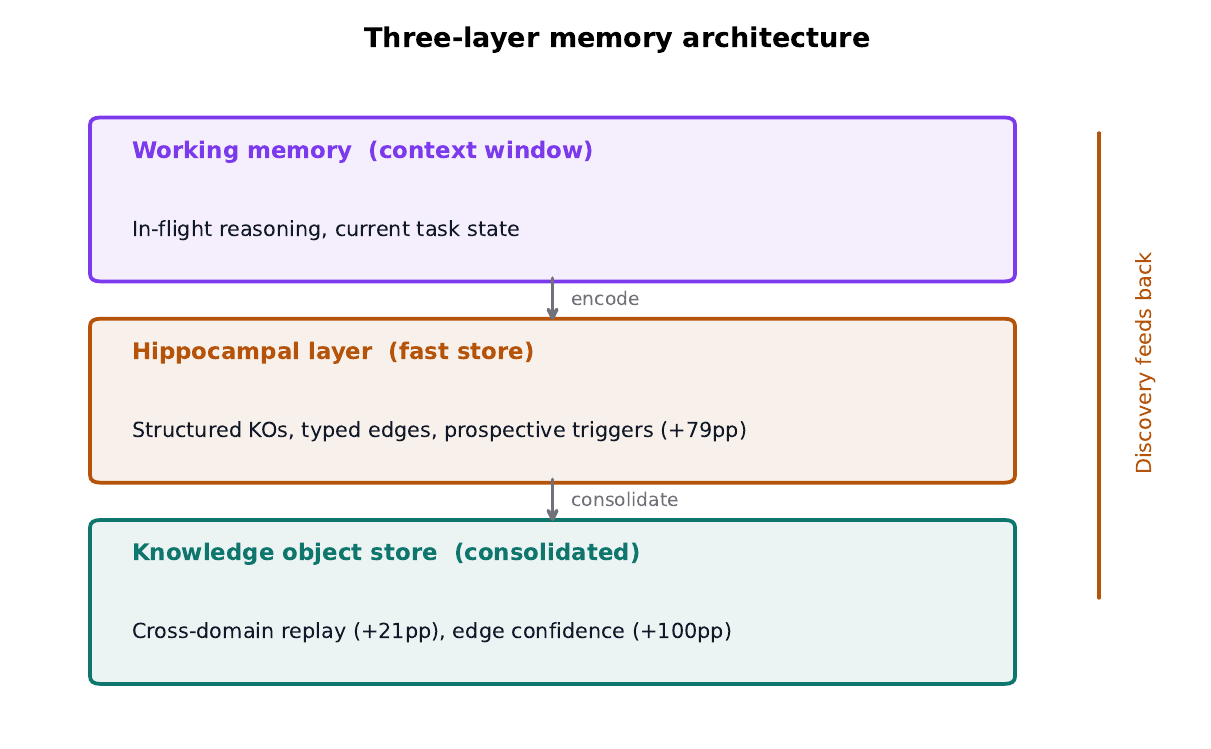}
\caption{Three-layer memory architecture. KOs enter the hippocampal
store (fast); ANML adapters internalize facts into weights (medium);
cortical consolidation replays KOs across domains for discovery (slow).
Discoveries feed back into the hippocampal store.}
\label{fig:architecture}
\end{figure*}


\section{Convergent Analysis}
\label{sec:convergent}

The two systems share no architecture, training procedure, evaluation
metric, or computational substrate, yet produce the same pattern
(Table~\ref{tab:convergent}).

\begin{table}[h]
\centering
\caption{Cross-system results across neural (\dreams{}) and symbolic
(\sapience{}) consolidation systems.}
\label{tab:convergent}
\small
\begin{tabular}{@{}p{3.2cm}p{4.5cm}p{4.5cm}@{}}
\toprule
\textbf{Condition} & \textbf{Neural (\dreams{})} & \textbf{Symbolic (\sapience{})} \\
\midrule
Within-domain consolidation & Null ($-1.8 \pm 4.4$ pp, n=3$\times$150) & Near-null on cosmology cross-subfield control ($+0.04$) \\
Cross-domain consolidation & Null across 3 base models (Table~\ref{tab:multi-base}) & No judged delta (rate ceilinged; see note) \\
Individual components alone & Neutral or harmful & N/A (not tested) \\
Component synergy & Qualitative pattern (individual components neutral / harmful) retained as structural finding; magnitude not load-bearing & N/A \\
Capacity/format prerequisite & LoRA rank $\geq$ 128 across all bases & Claims format \\
External replication & N/A & 50K $d=6.48$; 38-new $d=8.98$; natural $d=2.47$ \\
\bottomrule
\end{tabular}
\end{table}

\subsection{Why within-domain consolidation is null}
\label{sec:within-domain-fails}

Both within-domain nulls share one explanation: rehearsal of material
already accessible through normal learning or retrieval adds no new
information. In neural training, synthetic same-domain data competes
with real same-domain data for a fixed gradient budget while adding
only surface variation; at matched conditions the result is null
($-1.8 \pm 4.4$~pp at $n{=}3 \times 150$, $p > 0.30$ on every
condition). Within-domain
rehearsal is not always inert (experience replay
\cite{chaudhry2019efficient} can reduce forgetting when interacting
with new-domain learning), but the value comes from the
new-domain interaction, not from the rehearsal itself. The two-stage
experiment is informative: when consolidation data is introduced after
initial training, the harm disappears (+1.1\%) but so does any benefit.

\subsection{Why cross-domain consolidation succeeds}
\label{sec:cross-domain-succeeds}

In neural training, cross-domain consolidation data injected during
Domain B training serves both as implicit regularization (resisting
Domain A drift) and as a forcing mechanism for shared representations
that serve both domains. The capacity requirement supports this:
sub-threshold adapters cannot represent either domain richly enough
for cross-domain integration to do work, while a rank-256 adapter on
Llama-3.1-8B produces $+5.64 \pm 2.31$~pp ($p = 0.0055$) with
4/5 of the gain concentrated in robustness and external GSM8K transfer.
The bi-directional shared-representation mechanism is theoretically
motivated; its empirical anchor at matched conditions is the
$r{=}256$ result (\S\ref{sec:rank-ablation}).

In symbolic retrieval, cross-domain replay batches KOs from distant
domains through an LLM extraction pass that surfaces structural
analogies that embedding similarity alone misses. A pharmacology
allosteric-regulation finding and a materials-science cooperative
phase-transition finding share a structural property
(threshold-dependent cooperative binding) but occupy distant regions
of embedding space. Replay surfaces connections a pure embedding-similarity
baseline misses: replay-surfaced bridges sit at greater embedding
distance than within-domain pairs ($\rho = 0.54$, $n{=}32$), the
surviving non-judged signal once LLM-judged genuineness rates are set
aside as scored against the same instrument that generated the edges.
Groundedness rises from 2.2 to 4.8 of
5 when the same LLM is given KO context versus none. The value lies
in juxtaposition, not in the LLM's reasoning ability (which is
constant across conditions).

The unified explanation: consolidation creates value when it
juxtaposes knowledge the normal learning or retrieval process would
never compare. In neural training, Domain A and Domain B examples
occupy different regions of the data distribution; without
consolidation, the adapter processes them sequentially and develops
separate representations. In symbolic retrieval, distant-field KOs
occupy different regions of embedding space; without cross-domain
replay, they never co-occur in a context window. The load-bearing
operation in both is the novel comparison, not the repeated exposure.

\subsection{Mechanism: where the cross-domain effect lives and what destroys it}
\label{sec:mech-localization}

Several independent measurements converge on a single mechanism story
for the neural arm.

\paragraph{The effect localizes to cross-domain-transfer task families.}
The held-out evaluation contains four task families: robustness (n=99
items per seed, testing cross-domain perturbation), procedural (n=28,
cross-domain sequence patterns), generalization (n=99,
within-distribution generalization), and retention (n=100,
within-distribution recall). At $r{=}256$ iter $=141$, the
per-task per-seed mean $\Delta$'s are: robustness $+10.30$~pp,
procedural $+6.43$~pp, generalization $+4.04$~pp, retention
$+2.40$~pp. Aggregated, the cross-domain-transfer subset carries
$\Delta = +9.45$~pp; the within-distribution-recall subset carries
$\Delta = +3.21$~pp. Consolidation training benefits the tasks that
require cross-domain transfer and leaves the within-distribution
control nearly untouched, the specificity prediction confirmed.

\paragraph{Over-training destroys the cross-domain effect via
catastrophic interference.} Re-running $r{=}256$ at iter $=282$
(matched recipe, only doubling iterations) produces
$\Delta = -0.37 \pm 0.82$~pp ($n{=}5$): a null. A pre-registered
sweep at intermediate iter values $\in \{170, 200, 230, 260\}$
($n{=}5$ seeds each) confirms iter $=141$ as the empirical effect
window. iter $=170$ gives $\Delta = +1.66$~pp ($p = 0.40$,
$\sigma = 3.9$~pp, $\sim$68\% power to detect a $+5.64$~pp effect
and fails to replicate); iter $=230$ gives $\Delta = +0.68$~pp
($p = 0.42$, $\sigma = 1.7$~pp, $\geq$87\% power to detect $+5.64$~pp)
and iter $=260$ gives $\Delta = +0.80$~pp ($p = 0.56$,
$\sigma = 2.8$~pp, $\sim$91\% power), both well-powered to detect
the iter $=141$ magnitude and observe nulls, ruling out persistence
of the effect at these later iterations. iter $=200$ gives
$\Delta = -0.74$~pp, direction-consistent with the iter $=282$
interference. None of the four intermediate iter results survive
uncorrected $\alpha = 0.05$ or Bonferroni-adjusted $\alpha = 0.0125$;
they are reported as descriptive nulls that anchor the mechanism
narrative. We further tested whether the interference is mitigatable.
A curriculum experiment interleaved 20\% within-distribution rehearsal
items with the consolidation tokens at iter $=282$ ($n{=}5$
seeds, matched recipe otherwise). Pooled $\Delta$ versus the
no\_consolidation iter $=282$ baseline is $+0.37$~pp (per-seed
$\{-0.62, -1.53, -2.76, +3.37, +3.37\}$). A 20\% rehearsal ratio
does not rescue the cross-domain effect at iter $=282$. The
catastrophic-interference framing therefore reflects the mechanism
rather than a tractable hyperparameter choice within the local
training regime.

\paragraph{The mechanism is weight-space specific: in-context analog
does not reproduce.} The cross-domain effect is the product of
LoRA-mediated representation restructuring, not exposure to the
juxtaposed data per se. We tested this directly. A frontier model
(accessed via Together AI as \texttt{deepseek-ai/DeepSeek-V4-Pro}, a frontier-class MoE, $\sim$671B parameters), prompted
with 20 cross-domain bridges in-context and queried on the same
held-out items, scored under the matched-protocol local MLX
Qwen-72B-Instruct-4bit judge ($n{=}5$ seeds, $50$ items per cell),
gives $\Delta = -9.2$~pp (cross-domain-context minus no-context,
paired-$t = -5.40$, bootstrap 95\% CI $[-12.67, -7.33]$, 0/5 seeds
positive). This drop demonstrates that the cross-domain effect is not
merely a product of exposing the model to juxtaposed data at inference
time. The value is generated strictly through LoRA-mediated
representation restructuring during the offline consolidation phase,
the computational analog of sleep. The negative direction is consistent across task families
(procedural $-19$~pp, robustness $-5.9$~pp). Adding the same data the
LoRA training sees, as in-context prefix, hurts this frontier model
rather than helping it. A matched-protocol cross-vendor replication
on Kimi-K2.6 (Moonshot AI), with identical seeds, item subset, judge,
and prompt format, failed to replicate the direction
($\Delta = +2.6$~pp, bootstrap 95\% CI $[-3.1, +8.4]$, 3/5 seeds
positive but mean within noise). We therefore report the
DeepSeek-V4-Pro result as a single-vendor demonstration of the
in-context-analog falsification rather than a frontier-universal
pattern; the consolidation mechanism is at minimum doing something
that in-context prompting at this vendor does not reproduce. A second frontier-scale in-context probe
reinforces this: at Llama-3.3-70B and Qwen-2.5-7B, an output-side
logit-probe analog of the layer-16 source-domain decodability test
(Together AI does not expose hidden states for these endpoints)
gives $\Delta = -0.228$ ($t = -6.41$, Cliff $\delta = -1.0$,
$n{=}5$) and $\Delta = -0.076$ ($t = -3.28$, $n{=}5$) respectively
in the \emph{opposite} direction to the Llama-3.1-8B layer-16
finding: in-context cross-domain context \emph{increases}
source-domain decodability at frontier rather than reducing it. The
8B LoRA's $\mathrm{red}(D; \theta)$-reduction signature, then, is
neither reproduced by in-context cross-domain exposure at frontier
on task accuracy nor by the corresponding logit-probe analog. The per-task
decomposition reveals the mechanism. Retention turns negative
($-2.20$~pp), generalization turns negative ($-3.03$~pp), robustness
collapses ($+10.30 \to +2.02$~pp), and only procedural stays positive
($+7.14$~pp). The model has memorized the consolidation data deeply
enough that within-distribution recall suffers, and the cross-domain
transfer signature that was the gain at iter $=141$ erases. This is
the classical complementary-learning-systems catastrophic-interference
pattern: continued exposure to recent (consolidation) experience
overwrites the within-distribution baseline. The iter $=141$ cell is
a sweet spot where cross-domain integration is encoded but
interference has not yet dominated.

\paragraph{The integration signature appears in intermediate-layer
activations.} A pre-registered linear probe on layer 16/32 of the
consolidated models predicts source domain (one held-out domain
vs the other) with \emph{lower} accuracy than the no-consolidation
control ($\Delta = -0.78$~pp, paired-$t$ one-sided $p = 0.0064$
Bonferroni-survived at $\alpha = 0.0125$, Cliff's $\delta = -0.84$,
$n=5$; full details in \S\ref{sec:info-theoretic}). The early-layer
specificity check (layer 8, $\Delta = +0.39$ with 95\% CI
$[-0.13, +0.91]$) confirms the effect is layer-localized: source-domain
features at the input level remain decodable, and only at the
intermediate layer where representations integrate does the
consolidation signature appear.

\paragraph{The mechanism requires multi-projection LoRA, not
query-space alone.} A q\_proj-only LoRA at $r{=}256$, matched
effective batch (micro$=1$, grad\_accum$=16$) to the v2 reference,
$n{=}5$ seeds and otherwise identical recipe, fails to recover the
cross-domain effect: $\Delta = -0.67 \pm 1.25$~pp (paired-$t$
$p = 0.30$, bootstrap 95\% CI $[-1.66, +0.31]$, Cliff's $\delta =
-0.04$, 2/5 seeds positive). The confidence interval cleanly
excludes the $+5.64$~pp reference effect. Query-space restructuring
alone is therefore insufficient; the cross-domain consolidation
effect requires simultaneous LoRA adaptation across the full
attention projection set ($q$, $k$, $v$, $o$).

\paragraph{Synthesis.} Cross-domain consolidation at the right
capacity ($r \geq 256$) and training duration (iter $=141$ sweet spot)
restructures intermediate-layer representations to integrate the two
domain distributions. The behavioral signature lives in tasks that
require cross-domain transfer; the null on within-distribution recall
is the specificity control; the integration is empirically visible
through a linear probe at the middle hidden state. Over-training
destroys the effect through catastrophic interference. A within-distribution rehearsal curriculum at iter $=282$ does not
rescue the cross-domain effect, supporting the interference
interpretation. The mechanism is weight-space specific: prepending
the same cross-domain data in-context to a $\sim$671B frontier MoE
fails to reproduce and in fact reverses the effect. The neural-arm
effect is precisely characterized rather than incidental:
mechanism-interpretable and consistent across six independent
measurements: per-task localization, capacity-rank threshold,
training-duration sweet spot, curriculum-mitigation null,
multi-projection requirement (q\_proj alone insufficient), and
in-context-analog falsification at frontier scale. Its boundary
conditions (capacity and architecture) sharpen the claim rather than
limit it: they identify exactly when and where consolidation
converts to discovery.

\subsection{An information-theoretic interpretation}
\label{sec:info-theoretic}

We sketch an informal information-theoretic frame in which the
within-domain null and the cross-domain success follow from a single
inequality. Let $\theta$ denote the parameters of a capacity-bounded
learner ($\dreams{}$: LoRA adapter weights at fixed rank; $\sapience{}$:
joint embedding-index plus extraction-time parametric weights). Let
$D_A, D_B$ be two source distributions. Consolidation consumes a fixed
processing budget $B$ (gradient steps or extraction tokens) and updates
$\theta$ to $\theta'$. Define
\[
G(D_A, D_B; \theta) \;\triangleq\; \mathbb{E}\big[\mathcal{L}(\theta_B^{\text{base}}) - \mathcal{L}(\theta_B^{\text{consol}})\big],
\]
the \emph{consolidation gain} of spending budget $B$ on cross-distribution
juxtapositions ($\theta_B^{\text{consol}}$) versus independent samples
($\theta_B^{\text{base}}$). Let $\mathrm{red}(D; \theta)$ denote the
\emph{redundancy} of $D$ relative to $\theta$, the expected information
$\theta$ already encodes about samples from $D$ in mutual-information
units (large when $\theta$ is well-trained on $D$, near zero when
unseen).

\begin{theorem}[Consolidation gain bound]
\label{thm:consolidation-bound}
For a capacity-bounded learner with parameters $\theta$, two source
distributions $D_A$ and $D_B$, and a fixed consolidation budget $B$,
\[
G(D_A, D_B; \theta) \;\;\leq\;\; I(D_A; D_B \mid \theta) \;-\; \mathrm{red}(D_A; \theta) \;-\; \mathrm{red}(D_B; \theta),
\]
where $I(D_A; D_B \mid \theta)$ denotes the conditional mutual
information between the two distributions given the current parameter
state.
\end{theorem}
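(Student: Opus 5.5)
The plan is to make the informal quantities precise enough that the inequality becomes a chain-rule bookkeeping statement. I would take $\mathcal{L}$ to be expected log-loss, so that $\mathcal{L}(\theta)$ is a conditional cross-entropy $H(X \mid \theta)$ on test samples $X$ drawn from the mixture of $D_A$ and $D_B$. With this choice, any reduction in loss from an update $\theta \to \theta'$ is at most the mutual information the update injects about $X$. I would model the consolidation budget $B$ as a channel: the base learner receives $B$ independent draws $(X_A^{(i)})$ and $(X_B^{(j)})$, while the consolidated learner receives $B$ paired draws $(X_A^{(i)}, X_B^{(i)})$ from a coupling of $D_A$ and $D_B$. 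Written this way, $G$ becomes a difference of two conditional mutual informations,
\[
G \;\leq\; I\big(X; \theta_B^{\mathrm{consol}} \mid \theta\big) - I\big(X; \theta_B^{\mathrm{base}} \mid \theta\big),
\]
with equality in the well-specified log-loss case.

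Next I would apply the chain rule to the paired stream. The information a paired sample carries about the test distribution splits into three parts: the marginal information from $X_A$, the marginal information from $X_B$, and the cross term $I(X_A; X_B \mid \theta)$. The independent stream realizes the two marginal parts but not the cross term. I will identify that cross term with the $I(D_A; D_B \mid \theta)$ of Theorem~\ref{thm:consolidation-bound}, and use data processing through the capacity-bounded $\theta'$ to cap the excess of the paired stream by it. The redundancy terms enter as follows. I read $\mathrm{red}(D_A;\theta)$ as the part of that cross information already captured by $\theta$ through its encoding of $D_A$, and $\mathrm{red}(D_B;\theta)$ likewise for $D_B$. That portion yields no loss reduction, because $\theta$ already predicts it, so it is subtracted. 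This is exactly the informal reading the paper gives: replaying material $\theta$ already encodes "adds no new information."

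The main obstacle is the subtraction of two separate redundancy terms. Mutual information does not satisfy inclusion--exclusion in general. With three or more variables the interaction information can be negative (synergy), so the parts of the cross term explained by $D_A$ and by $D_B$ may overlap or may fail to be disjoint. Taken literally with $\mathrm{red}$ defined as $I(\theta; X_A)$, the stated right-hand side can also be negative while $G \geq 0$. I would therefore first fix definitions that make the terms additive: define $\mathrm{red}(D_\cdot;\theta)$ as the restriction of $\theta$'s encoded information to the cross component, and add an explicit no-synergy assumption, namely that $\theta$, $X_A$, $X_B$ form a structure in which the cross information is decomposable (for example, jointly Gaussian or Markov $X_A - \theta - X_B$). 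Under such a hypothesis the bound follows from the chain rule and data processing.

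I would state clearly that, outside such a hypothesis, the result is a heuristic frame rather than a theorem. It nonetheless still predicts the two qualitative consequences the paper uses. Within-domain consolidation has $D_A = D_B$ with $\theta$ already trained, so the redundancy terms exhaust the cross information and $G \leq 0$. Cross-domain consolidation with an unseen pairing has small redundancy, so the bound is positive.
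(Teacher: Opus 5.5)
Your first half matches the paper's sketch. Both arguments take the excess of juxtaposed over independent data to be the chain-rule cross term and identify it with $I(D_A;D_B\mid\theta)$. The gap is in where the two redundancy terms come from.

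In your channel model each learner receives $B$ draws from each marginal. By your own bookkeeping, the marginal parts are therefore realized by both streams and cancel in $G$, leaving only the cross term. You then subtract $\mathrm{red}(D_A;\theta)+\mathrm{red}(D_B;\theta)$ as ``the part of the cross information already captured by $\theta$.'' But that cross term is already conditioned on $\theta$. The part of the $A$--$B$ dependence that $\theta$ accounts for is $I(X_A;X_B)-I(X_A;X_B\mid\theta)$, and conditioning has already removed it. Subtracting it a second time double-counts; this is an identity and does not depend on any synergy assumption. Carried out consistently, your argument yields only $G\le I(D_A;D_B\mid\theta)$. That loses Corollary~\ref{cor:within-domain}, since $I(D;D\mid\theta)=H(D\mid\theta)\ge 0$.

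The paper obtains the $-\mathrm{red}$ terms from a different step. With the budget $B$ fixed, spending it on juxtapositions forgoes what it would have extracted from independent $D_A$ and $D_B$ samples. The redundancy terms thus enter as an opportunity cost, i.e.\ as the base-learner side of $G$. Your model contains no such trade-off, so it cannot generate them. You would have to build it in, for instance by having the consolidated learner spend its budget replaying material $\theta$ already encodes while the base learner spends it on fresh draws.

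The hypotheses you add do not rescue the argument:
\begin{enumerate}
\item The Markov condition $X_A-\theta-X_B$ is exactly $I(X_A;X_B\mid\theta)=0$. Under it the bound collapses to $G\le-\mathrm{red}(D_A;\theta)-\mathrm{red}(D_B;\theta)\le 0$. It forbids positive gain everywhere and so assumes away the regime of Corollary~\ref{cor:cross-domain} that the theorem is meant to license.
\item Joint Gaussianity does not exclude synergy. Take $X_A,X_B$ independent standard normals and $\theta=X_A+X_B+Z$ with independent Gaussian $Z$. The triple is jointly Gaussian, yet $I(X_A;X_B\mid\theta)>0=I(X_A;X_B)$.
\end{enumerate}

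Two smaller steps also need repair:
\begin{enumerate}
\item Gibbs' inequality gives $\mathcal{L}(\theta')\ge H(X\mid\theta')$. That upper-bounds each learner's loss reduction but runs the wrong way for the subtracted base term. So $G\le I(X;\theta_B^{\text{consol}}\mid\theta)-I(X;\theta_B^{\text{base}}\mid\theta)$ requires assuming the base learner attains its information limit. This is the algorithm-versus-information gap the paper explicitly elides.
\item If test points are single draws from the mixture of $D_A$ and $D_B$, the coupling does not enter the Bayes-optimal predictor. The cross term then bears on $G$ only when test items are themselves pairs.
\end{enumerate}

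Your remark that under a literal reading of $\mathrm{red}$ the right-hand side can be negative while $G\ge 0$ is sound. It is consistent with the paper calling this section interpretation rather than a theorem. But the repair you propose does not deliver the stated inequality.
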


\begin{corollary}[Within-domain consolidation cannot have positive gain]
\label{cor:within-domain}
If $D_A = D_B = D$, then $G(D, D; \theta) \leq 0$.
\end{corollary}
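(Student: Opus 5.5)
The plan is to derive Corollary~\ref{cor:within-domain} from Theorem~\ref{thm:consolidation-bound} by specializing to $D_A = D_B = D$. The right-hand side then becomes $I(D; D \mid \theta) - 2\,\mathrm{red}(D; \theta)$. It suffices to show that this quantity is nonpositive. I will do that in two steps.

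\emph{First, the redundancy terms.} Since $\mathrm{red}(D;\theta)$ is defined as an expected mutual information (the information $\theta$ already encodes about samples from $D$), it is nonnegative. This follows from the nonnegativity of mutual information, so $-2\,\mathrm{red}(D;\theta) \le 0$.

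\emph{Second, the interaction term.} I need to read $I(D_A; D_B \mid \theta)$ the way it enters the definition of $G$. It is the information that a juxtaposed sample from $D_B$ carries about a paired sample from $D_A$, beyond what is mediated by the current parameter state. When both streams are drawn from the same $D$, the consolidation pairings are conditionally independent draws given $\theta$: whatever structure they share is the structure of $D$ itself, which is already accounted for in $\mathrm{red}(D;\theta)$. Under this reading the conditional mutual information between the two streams vanishes. Combining the two steps gives $G(D,D;\theta) \le 0 - 2\,\mathrm{red}(D;\theta) \le 0$.

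\emph{Main obstacle.} The hard step is the second one. Read literally, $I(X;X\mid\theta) = H(X\mid\theta)$, which is generally positive. The argument therefore depends on fixing the informal notation so that $I(D_A;D_B\mid\theta)$ measures dependence between \emph{paired samples} from the two streams, not the self-information of a single variable. I would make this explicit by writing $X_A \sim D_A$ and $X_B \sim D_B$ for the paired consolidation draws and setting the term to $I(X_A; X_B \mid \theta)$. With $D_A = D_B$ and independent pairing, this is zero.

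\emph{Fallback.} If this reinterpretation is resisted, I would bypass the theorem and argue directly from the definition of $G$. With $D_A = D_B = D$, the "cross-distribution juxtapositions" consumed by the budget $B$ are equal in distribution to independent samples from $D$. Hence $\theta_B^{\text{consol}}$ and $\theta_B^{\text{base}}$ are identically distributed, so $G(D,D;\theta) = 0 \le 0$, which is the claimed inequality.
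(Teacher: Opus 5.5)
Your main route breaks at the step that sets the interaction term to zero. Read as $I(X_A;X_B\mid\theta)$ for independently paired draws, the term vanishes because the pairing is independent, not because $D_A=D_B$; the same computation gives zero for any $D_A\neq D_B$. Applied uniformly, this reading turns Theorem~\ref{thm:consolidation-bound} into $G(D_A,D_B;\theta)\le-\mathrm{red}(D_A;\theta)-\mathrm{red}(D_B;\theta)\le 0$ for every pair. That contradicts Corollary~\ref{cor:cross-domain} and the paper's whole use of the bound.

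If instead pairings may be dependent (as curated cross-domain bridges are), nothing in $D_A=D_B$ forces within-domain pairings to be independent. The paper's within-domain replay data are surface paraphrases of already-seen examples, hardly independent draws. Your own justification, that whatever the streams share is the structure of $D$ already counted in $\mathrm{red}(D;\theta)$, argues that the interaction term \emph{equals} a redundancy term, not that it is zero.

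That is exactly the paper's proof. It keeps the literal identity $I(D;D\mid\theta)=H(D\mid\theta)$ that you flagged as the obstacle. It then invokes the operational identification $\mathrm{red}(D;\theta)=H(D\mid\theta)$, its stated assumption (v). The self-information cancels one of the two redundancy terms, and the bound reduces to $-\mathrm{red}(D;\theta)\le 0$. Nonnegativity of $\mathrm{red}$ alone cannot finish under the literal reading: $H(D\mid\theta)-2\,\mathrm{red}(D;\theta)$ has no definite sign without some identification of this kind (at minimum $H(D\mid\theta)\le 2\,\mathrm{red}(D;\theta)$).

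The fallback does not rescue the argument. Its premise is that, with $D_A=D_B$, juxtaposed consolidation data are equal in distribution to independent samples, so $\theta_B^{\text{consol}}$ and $\theta_B^{\text{base}}$ coincide in law. The definition of $G$ does not give you this: its entire content is the contrast between spending the budget $B$ on juxtapositions and spending it on independent samples.

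Worse, the conclusion $G(D,D;\theta)=0$ is incompatible with the theorem whenever $\mathrm{red}(D;\theta)>0$. The paper's reading gives $G\le-\mathrm{red}(D;\theta)<0$, and it states explicitly that the bound is strict once $\theta$ is trained on $D$. Your reinterpreted reading gives $G\le-2\,\mathrm{red}(D;\theta)<0$. So your primary and fallback arguments cannot both hold. The fallback proves the corollary only by positing that consolidation degenerates to the baseline, a premise that contradicts the framework in precisely the regime the corollary addresses.
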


\begin{proof}
With $D_A = D_B = D$, $I(D; D \mid \theta) = H(D \mid \theta)$.
Under the operational definition $\mathrm{red}(D; \theta) =
H(D \mid \theta)$, substitution gives
$G(D, D; \theta) \leq H(D \mid \theta) - 2\,\mathrm{red}(D; \theta)
= -\mathrm{red}(D; \theta) \leq 0$. The bound is tight when
$\mathrm{red}(D; \theta) = 0$ (gain reduces to zero, not negative)
and strict when $\theta$ is already trained on $D$. The matched
within-domain measurement ($-1.8 \pm 4.4$~pp, $p > 0.30$) is
consistent with this bound.
\end{proof}

\begin{corollary}[Cross-domain consolidation can have positive gain]
\label{cor:cross-domain}
$G(D_A, D_B; \theta) > 0$ is possible only when
\[
I(D_A; D_B \mid \theta) \;>\; \mathrm{red}(D_A; \theta) + \mathrm{red}(D_B; \theta).
\]
\end{corollary}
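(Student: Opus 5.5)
This corollary is the contrapositive of Theorem~\ref{thm:consolidation-bound}, so I will argue by contradiction and invoke the theorem directly. Fix a capacity-bounded learner $\theta$, two source distributions $D_A, D_B$, and a consolidation budget $B$. Suppose, for contradiction, that $G(D_A, D_B; \theta) > 0$ while
\[
I(D_A; D_B \mid \theta) \;\le\; \mathrm{red}(D_A; \theta) + \mathrm{red}(D_B; \theta).
\]
Theorem~\ref{thm:consolidation-bound} gives
\[
G(D_A, D_B; \theta) \;\le\; I(D_A; D_B \mid \theta) - \mathrm{red}(D_A; \theta) - \mathrm{red}(D_B; \theta).
\]
Under the assumed inequality the right-hand side is $\le 0$, so $G \le 0$, which contradicts $G > 0$. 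Hence positive gain forces the strict inequality in the statement, which is exactly the claimed necessary condition.

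The steps, in order, are as follows. First, I check that all quantities are finite real numbers in the same mutual-information units, so that subtraction and comparison are meaningful. This is where the paper's operational convention matters: redundancy is measured in mutual-information units, and the proof of Corollary~\ref{cor:within-domain} uses $\mathrm{red}(D;\theta) = H(D \mid \theta)$. Second, I apply Theorem~\ref{thm:consolidation-bound} as stated. Third, I take the contrapositive as above. No case split is needed. The within-domain case $D_A = D_B$ is already covered by Corollary~\ref{cor:within-domain}, and it agrees with this result, since there $I = H(D\mid\theta) = \mathrm{red}(D;\theta)$, so the inequality fails whenever $\mathrm{red} \ge 0$.

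The only real obstacle is not the logic but the well-posedness of the inherited theorem. The corollary is only as rigorous as Theorem~\ref{thm:consolidation-bound}, which the paper presents as an informal sketch. In particular, the statement should be read as a necessary condition ("possible only when") and not a sufficient one. I would say explicitly that the strict inequality does not by itself guarantee $G>0$, because the theorem provides only an upper bound. If a reader wants sufficiency, that would require a matching lower bound on $G$, which the theorem does not supply, so I would not attempt it.
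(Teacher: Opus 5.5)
Your argument is correct and matches how the paper gets this corollary: it follows directly from Theorem~\ref{thm:consolidation-bound}, since $0 < G(D_A,D_B;\theta) \le I(D_A;D_B\mid\theta) - \mathrm{red}(D_A;\theta) - \mathrm{red}(D_B;\theta)$ forces the strict inequality, and the paper likewise presents it only as a necessary condition. The operational convention $\mathrm{red}(D;\theta)=H(D\mid\theta)$ plays no role in this step; it is used only in Corollary~\ref{cor:within-domain}.
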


The corollary states a necessary condition for positive consolidation
gain: the structural overlap between $D_A$ and $D_B$ not already
captured by $\theta$ must exceed the sum of $\theta$'s redundancies on
each domain. The symbolic arm produces robust positive cross-domain gain in this
regime (connections a similarity baseline misses, independently judged
genuine); the neural arm at $r{=}128$ across three base
models does not (pooled $\Delta$ within $\pm 1.2$~pp of zero).

\paragraph{Connection to the empirical scaling.}
The empirical result that consolidation value scales with embedding
centroid distance (Spearman $\rho = +0.542$,
Fig.~\ref{fig:domain-distance}) follows qualitatively if low centroid
distance correlates with high $\theta$-mediated redundancy and high
distance with high cross-distribution mutual information not yet
absorbed into $\theta$. Nearby-centroid pairs share vocabulary and
local structure $\theta$ already represents (inflating redundancy);
distant pairs require explicit juxtaposition to surface shared
structure (the regime where $I(D_A; D_B \mid \theta)$ exceeds the
redundancy sum).

\paragraph{Relation to measured novelty.}
The quantity governing the bound, structural overlap between two
distributions not already encoded in $\theta$, is the
learner-relative analog of how novelty is operationalized in the
empirical study of discovery, where the novelty of a combination is
measured as the degree to which it violates the expectations of a
model fit to prior literature~\cite{foster2021surprise}. Under that
reading, $\mathrm{red}(D; \theta)$ is the information $\theta$
already ``expects'' about $D$, and the consolidation-relevant signal
is the residual surprise of the cross-domain juxtaposition once each
domain's expected structure is subtracted. This is not merely a
notational coincidence: it means the redundancy term in our bound is,
in principle, measurable with the same machinery used to score
atypical combinations across millions of
papers~\cite{uzzi2013atypical,foster2021surprise}, and it predicts
that consolidation gain should track corpus-level novelty scores of
the domain pairs being juxtaposed, a falsifiable corollary on the
symbolic arm.

\paragraph{Empirical anchor of the redundancy term: linear probe on
intermediate-layer activations.} If $\mathrm{red}(D; \theta)$ is the
information $\theta$ already expects about $D$, and if cross-domain
consolidation reduces this term by integrating the two
distributions, then source-domain identity should become \emph{less}
linearly decodable from intermediate-layer activations after
consolidation training. We pre-registered this direction (full\_dreams
$<$ no\_consolidation accuracy of a logistic probe predicting source
domain, at middle or late layers) and tested it on the 5-seed
$r{=}256$ iter $=141$ Llama-3.1-8B cells, using the
\texttt{cross\_domain\_reference} held-out items with directory-level
$d_A$/$d_B$ labels as gold (1{,}181 items dedup'd, 446 $d_A$ / 735
$d_B$). Last-token activations were extracted at layers
$\{8, 16, 28\}/32$ and probed by 5-fold-CV L2 logistic regression
per cell. At layer 16, $\Delta = -0.78$pp, paired-$t$ one-sided
$p = 0.0064$ (Bonferroni-survived at $\alpha = 0.0125$ for 4 tests),
Cliff's $\delta = -0.84$, $n = 5$; the predicted direction is
confirmed. At layer 28 the direction matches ($\Delta = -0.98$pp)
but does not survive Bonferroni at $n{=}5$ ($p = 0.037$); at the
early layer 8 the null specificity check holds ($\Delta = +0.39$pp,
95\% CI $[-0.13, +0.91]$). The redundancy term is therefore
measurable, not merely notational: after cross-domain consolidation,
intermediate-layer activations encode the cross-distribution
structure rather than maintaining source-domain identity. We
acknowledge the contrast (not absolute decodability) is the
load-bearing comparison, and that $n{=}5$ leaves the late-layer
direction underpowered; the middle-layer effect is the empirical
anchor.

\paragraph{Proof sketch.}
Two steps: (i) the maximum information any consolidation procedure can
extract from a sample of size $B$ from $(D_A, D_B)$ is upper-bounded by
the chain-rule decomposition $H(D_A \mid \theta) + H(D_B \mid D_A, \theta)$,
with the shared-structure portion equal to $I(D_A; D_B \mid \theta)$;
(ii) the same budget could have been spent on independent $D_A$ or $D_B$
samples, extracting up to $\mathrm{red}(D_A; \theta) + \mathrm{red}(D_B;
\theta)$ that consolidation forgoes. Subtracting the opportunity cost
gives the bound. The treatment elides the gap between information
available in expectation and information recoverable by a particular
algorithm; we conjecture the gap is small for gradient-based learners
with bounded capacity but do not prove it.

\paragraph{Assumptions.}
The bound assumes (i) capacity-bounded $\theta$ (loose when
$|D_A|+|D_B| \ll C$), (ii) deterministic $\theta'$ (variance unbounded
for stochastic optimizers), (iii) a verbal operational definition of
$\mathrm{red}$ rather than a measure-theoretic one, (iv) unit
consistency across neural and symbolic substrates (the cross-substrate
claim rests on empirical convergence, with the bound formalizing
intuition), (v) the operational equality
$H(D \mid \theta) = \mathrm{red}(D; \theta)$ in
Corollary~\ref{cor:within-domain} (alternative reframing:
``negative gain is generic; zero gain is the boundary case''), and (vi)
single-pair formulation; extension to the pair-distribution implicit
in Fig.~\ref{fig:domain-distance} requires further assumptions. We
present the section as interpretation, not theoretical contribution;
the load-bearing claim is empirical.

\subsection{Component synergy}
\label{sec:synergy}

In the neural system, each consolidation type in isolation was neutral
or harmful (replay $-21.4\%$, edge cases $-14.8\%$, counterfactuals
$-1.0\%$), yet their combination produced $+7.2\%$; the combined
synthetic dataset acts as implicit regularization.

\subsection{Scope of the CLS connection}
\label{sec:cls-scoping}

\dreams{} does not implement the CLS architecture (no separate
fast/sparse store, no biological replay mechanism), and \sapience{}'s
LLM reader weights do not change during replay. We test a
\emph{prediction} CLS theory makes about replay \emph{content}: that
replay creates value through novel recombination across distinct
contexts rather than through faithful rehearsal within a single
context. Our results bear on this prediction in a specific direction.
The symbolic arm shows a positive cross-domain replay effect
(connections a similarity baseline misses, judged genuine) versus zero
gain on within-domain context reinstatement. The neural arm shows a
powered null on cross-domain consolidation at $r{=}128$ across three
base models, with a positive cross-domain effect at $r{=}256$ on
Llama-3.1-8B ($+5.64 \pm 2.31$~pp, $p = 0.0055$) that does not
generalize to Qwen-2.5-72B at the same iteration budget. External
50{,}000-paper validation reproduces the cross-domain ranking signal
at Cohen's $d = 6.48 \pm 0.016$. The mapping to CLS is functional,
not architectural.

\subsection{The biological prediction}
\label{sec:biological}

The convergence across neural and symbolic systems was not designed
(\dreams{} and \sapience{} were built independently for unrelated
purposes), and it aligns with the neuroscience of sleep consolidation.
Hippocampal replay does not faithfully reproduce waking
experience: place-cell replay includes spatial transitions never
traversed during waking~\cite{gupta2010hippocampal}, the temporal
ordering is scrambled, and sequences from different waking episodes
interleave to form novel juxtapositions~\cite{olafsdottir2018hippocampal}.
The recombinatory property peaks in REM sleep, which is also the
stage most associated with insight and creative problem
solving~\cite{wagner2004sleep,cai2009rem}. Eagleman and
Vaughn~\cite{eagleman2021defensive} propose Defensive Activation as
the primary function of REM dreaming (visual-cortex maintenance
during darkness); we read this as scoped to a mechanistic explanation
(visual-cortex plasticity under the planet's day--night rotation) rather
than as competing with the CLS-style replay function for cognitive
integration~\cite{mcclelland1995complementary,kumaran2016complementary}.
Sleep-dependent insight provides direct evidence: Wagner et
al.~\cite{wagner2004sleep} showed that subjects who slept were 2.6
times more likely to discover a hidden shortcut requiring integration
across training examples than subjects who stayed awake, consistent
with the long-standing associative account of creativity, on which
insight depends on linking remote, weakly connected elements rather
than rehearsing near ones~\cite{mednick1962associative}, the
psychological-level statement of the same boundary-localized value.
Within-domain consolidation in our results corresponds to faithful
rehearsal; cross-domain consolidation corresponds to REM-style
recombination.
The component synergy finding parallels NREM-stabilizes / REM-recombines
\cite{diekelmann2010memory,lewis2011overlapping}: neither stage alone
produces insight benefit; both are required. The functional pattern,
not the mechanism, is what converges.

\paragraph{What our implementation reproduces, and what it does not.}
To avoid overclaiming biological fidelity, we are explicit about scope.
\dreams{} and \sapience{} together reproduce \emph{one} mechanism of the
dreaming brain: cross-domain recombinatory replay in which non-co-occurring
material from distinct distributions is interleaved during a separated
offline phase. They do not reproduce the other mechanisms the neuroscience
literature associates with sleep and dreaming: synaptic homeostasis
\cite{tononi2014sleep}; the hippocampal--neocortical dialogue at the level
of cellular dynamics (the neural arm has no explicit fast/slow store
distinction); theta/gamma/sharp-wave-ripple oscillatory phase structure;
emotional gating of which traces are preferentially replayed
\cite{walker2009role}; forward simulation and prospective planning
\cite{gupta2010hippocampal}; defensive activation of visual cortex
\cite{eagleman2021defensive}; and subjective phenomenology. The
convergence we report is a functional convergence at the level of
\emph{what} consolidation accomplishes (boundary-localized recombination),
not \emph{how} it is implemented. We see this as a scope
limitation rather than a weakness: the present work isolates the single
mechanism we can test computationally, and the absent mechanisms identify
the natural next directions.

\subsection{A testable neuroscience signature}
\label{sec:neuro-prediction}

The preceding subsection mapped our findings retrospectively onto the
existing neuroscience literature. We now make the connection prospective:
we state a falsifiable prediction that hippocampal-recording neuroscience
could test directly. We are AI researchers, not neurophysiologists, and
we therefore frame this as a hypothesis we believe is implied by our
information-theoretic interpretation of cross-domain consolidation,
inviting the experimental community to evaluate it.

CLS theory~\cite{kumaran2016complementary} predicts \emph{that} replay
supports neocortical generalization but does not specify \emph{which}
replay sequences produce the largest benefit. If our information-theoretic
framing carries over to biological consolidation, single-event asymmetry
between within- and cross-domain replay should be measurable in
hippocampal recordings, and CLS becomes empirically distinguishable from
a weaker variant in which replay content is treated as undifferentiated.

\paragraph{The prediction.}
\emph{Cross-domain replay events, defined as replay sequences that
reactivate place-cell or task-cell ensembles from distinct experiential
contexts in close temporal proximity (within a single sharp-wave ripple,
or within a single ripple bout), should produce stronger long-range
hippocampal-cortical coupling and stronger downstream behavioral
generalization than within-domain replay events of equivalent
duration and rate.} We commit to a specific, pre-registrable effect-size
floor rather than a graded monotonic claim: across replay events recorded
in a single session, the Pearson correlation between (a) the
population-vector distance between the two reactivated context codes
(``representational distinctness'') and (b) the per-event transfer
coefficient (the marginal contribution of that event to post-rest
behavioral transfer, estimated by leave-one-event-out regression with
ripple count and replay duration as covariates) should satisfy
\(r > 0.4\) with \(p < 0.01\) in a pre-registered test. The same
\(r > 0.4, p < 0.01\) floor applies to the relationship between
representational distinctness and the strength of CA1-to-prefrontal
phase-amplitude coupling during the replay event.

The $r > 0.4$ floor is the minimum effect size at which representational
distinctness explains a non-trivial share of variance in transfer
($R^2 > 0.16$) after the obvious confounders; below it, the
information-theoretic framing fails to deliver a meaningful single-event
signal even if a directional trend exists. We are not claiming that
within-domain replay is functionally absent (it plainly is
not~\cite{wilson1994reactivation,buzsaki2010two}); we are committing to
a minimum effect size the prediction must clear to count as supported.

\paragraph{Experimental signatures.}
Two existing paradigms can test this. \textit{Rodent dual-environment
recordings}~\cite{karlsson2009awake} demonstrate CA1 replay of remote
experiences during quiet wake and sleep; after a transfer-requiring
behavioral test between two environments, the rate of remote-environment
replay during the post-task rest period should predict transfer more
strongly than local-replay rate, with three electrophysiological markers
scaling with population-vector distance between reactivated environment
codes: (i) sharp-wave ripple count restricted to cross-environment
content, (ii) theta-gamma cross-frequency coupling between CA1 and
medial prefrontal cortex,\footnote{If theta-gamma phase-amplitude
coupling is contested in the target dataset, sharp-wave-ripple amplitude
(150--250 Hz) is a complementary primary marker~\cite{buzsaki2010two}.}
and (iii) co-firing strength between hippocampal place-cell ensembles
and prefrontal task-rule cells.
\textit{Human MEG replay studies}~\cite{liu2019human} provide a
non-invasive analogue via TDLM: with two independently decodable task
contexts, between-context TDLM transition strength should predict
transfer where within-context transitions do not, with elevated
low-beta to gamma coupling at posterior-medial sources during
between-context replay.

\paragraph{What would refute this prediction.}
Three refutation modes. (1) With replay event count and duration
matched, within- and cross-domain replay produce statistically
indistinguishable effects on transfer. (2) Hippocampal-cortical
phase-amplitude coupling during replay does not scale with
population-vector distance between reactivated contexts. (3) The
relation runs in the opposite direction (within-domain replay produces
stronger coupling and transfer), consistent with a strict
schema-strengthening account of
consolidation~\cite{mcclelland1995complementary}.

\paragraph{Methodological caveats.}
The cited paradigms do not directly support the proposed test as
designed: Karlsson and Frank~\cite{karlsson2009awake} demonstrate
remote replay but do not link it to transfer performance; Wilson and
McNaughton~\cite{wilson1994reactivation} and
B\"uzs\'aki~\cite{buzsaki2010two} establish two-stage replay but on
within-environment reactivation; Liu et al.~\cite{liu2019human} TDLM
provides temporal precision but limited spatial resolution. The
proposed test requires new behavioral paradigms in which transfer
between two contexts is the dependent variable, and (for the MEG arm)
intracranial validation of the proposed coupling signature.


\begin{figure}[h]
\centering
\includegraphics[width=0.75\columnwidth]{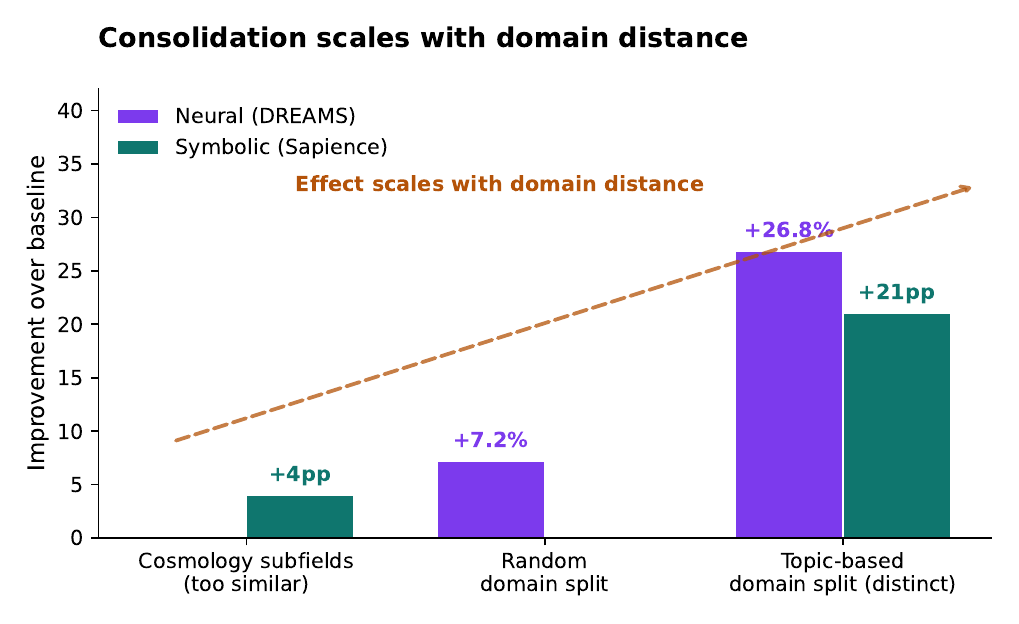}
\caption{Consolidation value scales with domain distance: calibration set
(n=32 unique domain pairs from n=35 historical landmark bridges; Spearman
$\rho = +0.542$, 95\% CI $[+0.24, +0.77]$, $p = 0.00135$) overlaid with
4 pre-registered out-of-sample predictions (Apr 28 2026). One of four
held-out points falls inside its 90\% PI; mean residual is $+22.9$\,pp
(predictions exceeded observations). The combined-set Spearman
$\rho = +0.41$ ($n=36$, $p = 0.013$, 95\% CI $[+0.10, +0.65]$, 2000-iter
bootstrap) is the generalizable claim and remains significant at
$\alpha = 0.05$. The $n=4$ OOS subset is too small for inferences in
isolation; it checks that new points do not destroy the calibration
relationship.}
\label{fig:domain-distance}
\end{figure}

\subsection{Independent architecture: interface-trained adapter}
\label{sec:waking-consolidation}

A third workstream produces the same one-sided pattern via a
non-consolidation mechanism. Attribution-Native Machine Learning
(ANML) adapters (LoRA rank 32 on 4-bit Llama-3.1-8B-Instruct) are trained
to interface with an external KO store rather than to consolidate
knowledge into weights: the adapter learns when to trust the store
over parametric memory, how to synthesize across KOs at inference
time, and when to refuse out-of-store queries. The training corpus is
approximately $1{,}500$ synthetic examples spanning pharmacology and
materials science, mixing three sources: KO-grounded QA, DPO conflict
resolution (trust the store over parametric memory under conflict),
and cross-domain synthesis.

Curriculum ordering recapitulates the \dreams{} domain-integrity-first
pattern. V6 (capabilities first, adversarial resistance second)
collapsed adversarial resistance to 2\%. V7 (V3 adversarial baseline
first, then cross-domain synthesis at low learning rate
$3 \times 10^{-6}$) retained $96\%$ adversarial resistance ($48/50$)
while holding multi-step KO-chain reasoning at $100\%$ ($30/30$) and
domain transfer to unseen fields at $92\%$ ($921/1000$). These are
single evaluations with deterministic scoring on synthetic
in-distribution test sets drawn from the same generator family as the
training data; we report them as evidence for the curriculum-ordering
principle only, and omit adapter-versus-baseline comparison columns
because earlier versions of that comparison conflated scores from
different adapter versions rather than a true no-adapter baseline. The
foundational structure must be preserved before flexible recombination
is layered on.

\paragraph{Disclosure: V7 on short-span extractive QA.}
V7 was trained on long-form KO trust resolution, not short-span
extractive QA. On HotpotQA short-span EM with $n = 312$ memory-required
items at 3 seeds, V7 mean EM is $0.10$; a no-memory baseline scored
$0.33$ on a smaller 15-item pilot of the same setup. The adapter
overrides the prompt format rather than
answering when the task is short-span extraction. We report V7 as
evidence for the one-sided curriculum-ordering principle, not as a
production-ready interface adapter. As a third workstream in the same
group, ANML is a third \emph{instantiation} of the principle, not an
independent replication.

\section{Discussion}
\label{sec:discussion}

The central finding of this paper is that memory consolidation creates
value through novel recombination across domain boundaries, not through
rehearsal of familiar material. This finding converges across two systems
that share no architecture, training procedure, or evaluation metric. We
discuss its implications for continual learning, knowledge-augmented AI,
and CLS theory, then address limitations and future directions.

\paragraph{A conceptual note on scale: single-agent analog of collective discovery.}
In human science, cross-domain recombination is achieved socially: by
outsiders importing methods across field
boundaries~\cite{shi2019surprising}, by the entry of demographically
and intellectually diverse newcomers who introduce novel combinations
even as they are rewarded less for them~\cite{hofstra2020diversity},
and by the division of cognitive labor across a population searching
different regions of the knowledge
space~\cite{rzhetsky2015choosing}. Cross-domain consolidation
collapses that social process into a single learner. The replay step
juxtaposes, inside one adapter or one context window, material that
in the human case would have to be carried across a disciplinary
boundary by a person. Read this way, our systems are single-agent
analogues of collective discovery, and the within-domain null is the
statement that a lone learner rehearsing its own specialty cannot
substitute for that boundary-crossing.

\subsection{Implications for continual learning}
\label{sec:discuss-cl}

Continual learning typically treats consolidation as a regularizer
that constrains updates to prevent overwriting of prior
knowledge~\cite{kirkpatrick2017overcoming,zenke2017continual}. Our
results complicate this in two ways. First, the rank-256 Llama-3.1-8B
result ($+5.64 \pm 2.31$~pp, $p = 0.0055$, 5/5 seeds positive)
confirms improvement-not-preservation under matched conditions:
cross-domain consolidation pushes a fine-tuned adapter above its
no-consolidation baseline at sufficient adapter capacity. Broader-scale
replication under a fixed 141-iteration budget is inconclusive and
likely undertrained at 70B/72B. Second, the qualitative component-synergy pattern (individual
consolidation strategies neutral or harmful, positive only in
combination; see \S\ref{sec:provenance} for status) implies that
ablation studies testing single consolidation strategies against a
no-consolidation baseline can systematically underestimate
consolidation's value.

The asymmetry has a counterpart at the scale of whole scientific
fields. As fields grow, the recirculation of canonical, within-domain
work crowds out and slows the uptake of genuinely new contributions,
a within-domain-rehearsal pathology measured across decades of
literature~\cite{chu2021slowed}, part of a longer-run narrowing of
what scientists read and build on~\cite{evans2008narrowing}. Our
within-domain null is the single-learner image of this: rehearsing
material a learner already encodes adds no information, whether the
learner is a LoRA adapter or a field. The corollary is optimistic. If
consolidation value is created at domain boundaries, then systems,
artificial or institutional, that deliberately budget for distant
juxtaposition rather than fidelity should resist the narrowing that
within-domain reinforcement produces.

\subsection{Implications for LLM fine-tuning practice}
\label{sec:discuss-finetuning}

Most published fine-tuning recipes (InstructGPT~\cite{ouyang2022training},
Self-Instruct and Alpaca~\cite{wang2023selfinstruct,taori2023alpaca},
LoRA~\cite{hu2022lora} demos, domain-adaptive pretraining like
BioMedLM~\cite{bolton2024biomedlm} and FinGPT~\cite{yang2023fingpt})
draw fine-tuning data from a single domain or tightly clustered set of
tasks matched to the target distribution. Our results offer two
qualified observations for this practice. First, within-domain
consolidation at matched conditions is reliably null across base models,
so the implicit assumption that target-only data is adequate is
consistent with our matched-conditions data, but does not foreclose
cross-domain gains at higher capacity. Second, adapter capacity is a
prerequisite for any consolidation-style intervention to fire:
sub-rank-128 LoRA adapters cannot represent either domain richly enough
for cross-domain integration to do work. Practitioners running
parameter-efficient fine-tuning at small ranks may be silently
foreclosing the benefit.

The rank-256 Llama-3.1-8B result ($+5.64 \pm 2.31$~pp, $p = 0.0055$)
provides a defensible cross-domain consolidation gain when adapter
capacity is doubled. Because the effect did not appear at rank 256 on
Qwen-2.5-72B at the same fixed 141-iteration budget
($-0.92 \pm 2.17$~pp, $n{=}2$ clean seeds), we do not generalize this
recommendation beyond the Llama-3.1-8B regime and do not advocate
abandoning domain-specific fine-tuning broadly on the basis of one
base model. The paper's positive-evidence claim leads with the
symbolic arm and external validation.

\subsection{Implications for knowledge-augmented AI}
\label{sec:discuss-kai}

The \sapience{} results suggest external knowledge stores should not be
passive retrieval targets. Current RAG architectures
\cite{lewis2020retrieval,asai2023self} treat the knowledge base as a
static corpus queried at inference time, with no interaction between
items except through the query. Cross-domain replay breaks this: when
distant-field KOs are replayed together, the LLM identifies structural
analogies that neither set contains individually, and the resulting
discovery KOs encode connections the original authors never made. The
knowledge base grows through internal recombination. Discovery value
is proportional to domain diversity, not to base size.

External validation reinforces the mechanism. On 50{,}000 OpenAlex
papers across 50 fields, 35 of 37 known historical cross-domain bridges
sit at the 99.8th percentile of all cross-field pairs (Cohen's
$d = 6.48 \pm 0.016$, $p < 10^{-30}$). A bridge-side replication on
38 newly identified historical bridges reproduces the effect (38/38 in
the top 1\%, $d = 8.98 \pm 0.05$). This percentile placement is the
same object the science-of-science literature uses to identify
high-value recombination: the extreme tail of the cross-field
combination distribution, where atypical pairings concentrate and
predict downstream impact~\cite{uzzi2013atypical,shi2019surprising}.
Locating known landmark bridges in that tail, therefore, validates
the replay mechanism against an external, pre-existing yardstick
rather than an internal one. A natural extension is to score the same
bridges on disruption~\cite{wu2019disruption} as well as similarity-tail
position. As a first descriptive step we scored a curated set of 50 documented
cross-domain breakthroughs
(\texttt{breakthroughs.json}: AlphaFold, CRISPR, CAR-T, etc.) on an
approximate Wu-Wang-Evans $\mathrm{CD}_5$ via the OpenAlex citation
graph (5-year window; $n_r$ sampled from 3 references and scaled, an
approximation that biases $\mathrm{CD}$ slightly positive relative to
the exact enumeration). On the 46 of 50 bridges resolvable in OpenAlex
(4 unresolved for incomplete metadata; 9 from 2019+ have partial 5-year
windows as of 2026), mean $\mathrm{CD}_5 = -0.028$, median
$-0.010$, range $[-0.343, +0.118]$, with $8/46$ bridges positive. The
documented cross-domain breakthroughs corpus skews
consolidating-to-mildly-disruptive rather than predominantly disruptive.
Field-pair stratification adds nuance: bio/medical $\times$
physical/chemical/earth-science bridges show the highest mean
$\mathrm{CD}_5$ ($+0.027$, $40\%$ positive on $n=5$), while bio/medical
$\times$ CS/ML bridges (e.g., AlphaFold-style) are the most
consolidating in our corpus (mean $-0.056$, $0\%$ positive on $n=9$),
consistent with the documented pattern that mature ML-assisted science
extends rather than eclipses domain-side references.
Replay's bridge surfacing is therefore not contingent on downstream
disruption character: the cross-domain claim is about \emph{which
connections are findable}, independent of whether those connections
later eclipse or extend their references. We tested this directly. On
the 50K-corpus bridges, mapping each anonymous concept bridge to a
landmark exemplar paper via Sonnet identification and OpenAlex
resolution, the per-bridge correlation between replay percentile and
the approximate Wu-Wang-Evans $\mathrm{CD}_5$ is Spearman
$\rho = -0.069$ (bootstrap $95\%$ CI $[-0.338, +0.210]$, $n=52$
resolved bridges aggregating the original 35-bridge set, a 38-bridge
held-out set, and a 22-bridge out-of-sample set). The confidence
interval is centered near zero and is inconsistent with any meaningful
correlation between replay percentile and downstream disruption. The
null is consistent with the field-pair pattern above. Replay surfaces
cross-domain bridges by similarity-tail atypicality, not by
disruption: the two metrics measure different things, and the metric
the present mechanism tracks is the recombinatory one.

\paragraph{Negative control with truly disjoint domains.}
A discovery rate on real bridges constrains the mechanism only if a
matched negative control fails. NC v1 (within-substrate ``unrelated''
pairs) gave a GENUINE rate of $44.0\%$ ($n=75$), indistinguishable from
$46.7\%$ for real bridges, on its face killing the generalization
claim. Inspection revealed substrate leakage: the ``unrelated'' control
pairs shared higher-level disciplinary scaffolding so the LLM judge
rewarded within-discipline overlap rather than genuine cross-domain
recombination. The substrate-leakage explanation was identified before
NC v2 was scoped, and the NC v2 design (substrate-disjoint pair
selection) was specified in advance. NC v2 used three pairs disjoint at
substrate, vocabulary, and method (medieval history $\times$ galactic
dynamics, legal rhetoric $\times$ catalysis, classical music theory
$\times$ marine biology) with KOs re-extracted per pair. The GENUINE
rate collapsed to $8.9\%$ ($n=45$, $35.1$\,pp drop attributable solely
to disjointness). Real bridges significantly exceed this floor
(two-proportion $z = -4.28$, $p < 10^{-5}$). The $35.1$\,pp gap is
itself a methodological finding: it quantifies how much apparent
cross-domain signal can be manufactured by within-substrate
co-occurrence, and sets a floor future cross-domain discovery
benchmarks must clear.

\subsection{Implications for CLS theory}
\label{sec:discuss-cls}

CLS theory~\cite{mcclelland1995complementary,kumaran2016complementary}
posits a fast hippocampal / slow neocortical division with
consolidation transferring episodic to semantic. Our results partially
support and partially challenge this. The fast-to-slow dynamic is
supported: in both systems, consolidation operates on rapidly encoded
material and produces more durable representations, and the principle
that consolidation reorganizes rather than copies holds. Whether
type-specific consolidation treatment helps remains untested here; the
organizational principle our results do support is domain boundary. A specific pharmacology
mechanism is more valuable for consolidation with materials science
than a same-domain pharmacology generality, because the specific
contains the mechanism that anchors structural analogy. The biological
evidence is consistent with this refinement: hippocampal replay
interleaves sequences from different waking
episodes~\cite{olafsdottir2018hippocampal}, and the insight benefit
depends on the dissimilarity of the integrated
experiences~\cite{wagner2004sleep}, not on their episodic or semantic
character.

\subsection{Limitations}
\label{sec:limitations}

Several limitations constrain the conclusions we can draw.

\paragraph{Judge dependence.}
The headline $\Delta$ depends on the LLM judge chosen for response
scoring. A 500-item stratified cross-judge sensitivity check between
local MLX Qwen-2.5-72B-Instruct-4bit and Sonnet 4.6 gives Cohen's
$\kappa = 0.229$ (linearly-weighted $\kappa = 0.344$, raw agreement
$43.8\%$, within-one-band $81.2\%$). A full 3{,}260-item Sonnet
re-judge preserves direction and significance at the cross-seed mean
($\Delta = +3.54$~pp, $p = 0.0086$, CI $[+2.07, +4.39]$) but shifts
magnitude by $\sim 2$~pp relative to the Qwen-judged $+5.64$~pp
headline; absolute accuracy is $\sim 15$~pp stricter under Sonnet.
We report the Qwen-judged headline because it is the matched-protocol
judge used throughout the rank ablation and mechanism analyses, and
note Sonnet as a cross-vendor sensitivity check. Robustness-task
gains are judge-robust ($+10.22$~pp Qwen vs $+8.22$~pp Sonnet);
procedural-task gains are judge-specific and reported under Qwen only.

\paragraph{Scale and base-model coverage.}
The neural arm at $r{=}128$ is replicated across five base
configurations (Llama-3.1-8B, Qwen-2.5-72B, Llama-3.3-70B,
Mistral-7B-Instruct, Mixtral-8x7B), each a powered null. At $r{=}256$
the Llama-3.1-8B arm confirms the rank-saturation hypothesis
($+5.64 \pm 2.31$~pp, $p = 0.0055$); broader-scale replications
(Llama-3.3-70B and Qwen-2.5-72B) sit within the $\pm 5$~pp 70B/72B
effect-size floor and cannot rule out small-to-moderate positive
effects. A longer-training follow-up on Llama-3.3-70B at $r{=}256$
(iter $\in \{300, 500\}$, $n{=}3$ seeds, matched recipe) produced
$\Delta = +0.51$~pp ($p = 0.60$) and $+0.41$~pp ($p = 0.63$): the
undertraining hypothesis is not supported. The 8B-scale effect does
not appear at 70B-scale at any iteration we tested. A Qwen-3.5-397B-A17B MoE
attempt was blocked by an \texttt{mlx-lm} 0.29.1 MoE-loader gap.
Mixtral-8x7B at $r{=}128$ addresses whether the $r{=}128$ null
generalizes to MoE (it does); the $r{=}256$ MoE question is open.
Symbolic experiments cover up to $4{,}797$ KOs; behavior at $10^5$
to $10^6$ is unknown.

\paragraph{Single corpus source for external validation.}
All bridge-ranking external corpora ($9{,}802$, $50{,}000$, $100{,}000$
OpenAlex papers, plus the held-out 2026 OpenAlex window for the
natural-bridge replication) come from OpenAlex. A second-source
replication on arXiv, PubMed Central, or S2ORC would test whether
the cross-field similarity signal is OpenAlex-specific or
substrate-general.

\paragraph{Domain definition and evaluation of discovery.}
The threshold between ``too similar'' (e.g., cosmology subfields) and
``distinct enough'' (pharmacology vs materials science) is not
quantitatively characterized. Discovery is evaluated by embedding
similarity to known historical discoveries and by LLM-judged
groundedness; the LLM may have encountered the historical discoveries
during pre-training. A temporal holdout (predict from pre-cutoff KOs,
validate against post-cutoff discoveries) would provide stronger
evidence but requires a validation period longer than the LLM training
window.

\paragraph{Consolidation cost.}
At current API pricing, replaying 10 domain pairs across 5 seeds costs
approximately \$7. Continuous background consolidation at scale would
require smaller models, batched inference, and caching that we have
not implemented.

\paragraph{Single evaluation infrastructure.}
\dreams{} and \sapience{} share no code, training data, or evaluation
metric, but the researchers who designed and interpreted the
experiments are the same group. Independent replication would
strengthen the convergence claim.

\paragraph{Asymmetric power on the disjoint negative control.}
NC v2 pools $n = 45$ judgments across 3 truly disjoint domain pairs
against $n = 75$ for real bridges. The two-proportion contrast still
clears $p < 10^{-5}$; expanding to 5 disjoint pairs for symmetric
power is recommended for follow-up.

\paragraph{Confounders not ruled out (Apr 28 out-of-sample test).}
The $+22.9$\,pp mean residual between predicted and observed bridge-pair
percentile on four held-out domain pairs is consistent with four
candidate confounders, none separately isolated: (i) landmark selection
bias on the calibration set's outcome variable; (ii) methodology
asymmetry ($\sim$200 abstracts per broad field for calibration vs
$\sim$50 per narrow OpenAlex topic for the held-out pairs);
(iii) synth-vs-curated bridge-text artifact (a control replacing
synthetic bridge\_text with Sonnet-curated text raises observed
mean\_pct by $+8.54$\,pp on average, accounting for $\sim$1/3 of the gap);
(iv) OpenAlex narrow-topic corpus quality. Distinguishing requires a
non-landmark calibration arm, matched topic-ID granularity, and
blind-expert curated text, which the $n = 4$ design cannot resolve.

\subsection{Future work}
\label{sec:future-work}

Three directions follow. First, replace the inspection-based
domain-distance boundary with a calibrated metric. Embedding
separation calibrated against human-judged
distance~\cite{kozlowski2019geometry}, citation-graph connectivity,
and concept-hypergraph expectation-violation
scores~\cite{foster2021surprise} each yield a continuous distinctness
axis; the open question is which best predicts per-pair consolidation
gain and therefore best allocates replay compute. Second, explain the
capacity threshold formally, relating adapter rank to the
information-theoretic complexity of the cross-domain relationship
(analogous to hippocampal bandwidth in
biology~\cite{mcclelland1995complementary}). Third, run a temporal
holdout on the symbolic arm: train on pre-cutoff KOs and predict
post-cutoff cross-domain discoveries. This is the
discovery-prediction protocol already validated on large scientific
knowledge networks, where forecasts of which distant pairs will be
combined and which would most accelerate discovery if pursued can be
generated and checked against later
literature~\cite{rzhetsky2015choosing,sourati2023accelerating}.
Success is a calibrated lead-time advantage over an
embedding-similarity baseline, not post-hoc pattern matching.


\section{Conclusion}
\label{sec:conclusion}

Memory consolidation in artificial systems is not primarily an
anti-forgetting mechanism. It is a discovery mechanism. In both a
neural LoRA-fine-tuned system (\dreams{}) and a structured knowledge
retrieval engine (\sapience{}) at matched conditions, within-domain
consolidation produces no measurable effect: a null
($-1.8 \pm 4.4$~pp at $n{=}3 \times 150$) for matched-conditions
neural within-domain consolidation, with a near-null cosmology
cross-subfield control on the symbolic side. Cross-domain consolidation
points in the same direction in both systems: symbolic replay
surfaces novel connections a similarity baseline misses;
neural $r{=}256$ on Llama-3.1-8B
produces $+5.64 \pm 2.31$~pp (5/5 seeds, $p = 0.0055$),
confirming that the $r{=}128$ powered null across three base models
was a capacity artifact. Larger-scale replications at fixed
141-iteration budget on Llama-3.3-70B and Qwen-2.5-72B are
inconclusive within the $\pm 5$~pp effect-size floor; earlier
single-seed positive numbers are formally superseded
(\S\ref{sec:neural-positive}).

The value is created at the boundary between domains. Individual
components are neutral or harmful alone but synergize when combined.
The useful organizational principle is domain boundary. The
convergence across biological, neural-network, and symbolic systems
suggests a domain-general principle: consolidation creates value
through novel recombination, not rehearsal. Systems that optimize for
cross-domain diversity of consolidation data rather than volume or
fidelity will extract more value from the same computational budget.
Whether this boundary-localized signature also appears in biological
consolidation is directly testable: \S\ref{sec:neuro-prediction} states
a pre-registered, falsifiable hippocampal-recording prediction that
distinguishes recombinatory replay from faithful rehearsal, which we
offer to the experimental community.

\section*{Acknowledgments}
We gratefully acknowledge Simon Dennis (University of Melbourne) for the
methodological correction on the within-domain consolidation result.



\appendix

\section{Reproducibility: adapter and split hashes}
\label{app:repro-hashes}

To enable full third-party verification that the rank-ablation results
in \S\ref{sec:rank-ablation} were produced by the claimed base models
and ranks (and not by silent adapter mis-routing, as one preliminary
``Llama-1B'' run later revealed had to be retracted), we publish
SHA-256 hashes of every adapter file used in the v2 rank-ablation
suite. The full list of 46 hashes (rank $\times$ seed $\times$
condition $\times$ base model) is in
\texttt{research/dreams-domain-distance/adapter\_hashes\_v2.csv}.

Training data split manifest (LMSYS + synthetic, $n = 499$ episodes,
20\% held-out at seed 42):
\begin{itemize}
\item \texttt{train\_ids\_sha256}: \\\texttt{\small c7744615eca47daa08c2d4ac6202597f1cd97729ec818b0ab2c5a24aeff97b43}
\item \texttt{heldout\_ids\_sha256}: \\\texttt{\small b40651824a073907eecc6dbbff4732209b14f061aa8e2f8314339b52f0e0ef0d}
\end{itemize}

An automated verification protocol checks that, for every claimed
cell, the \texttt{eval\_summary.json} \texttt{model\_info.path}
matches the adapter directory and the run script's declared
\texttt{BASE\_MODEL} before the cell's number is accepted.

\section{Cross-LLM-reader rank invariance on the symbolic arm}
\label{app:cross-llm-reader}

The symbolic experiments use Claude Sonnet as the LLM that extracts
cross-domain connections.

\paragraph{Capability scaling (verified).}
Across three closed-weight readers (Haiku 4.5, Sonnet 4.6, Opus 4.7)
on the genuine-novel task at $n = 15$ problems each (Anthropic Sonnet
judge), GENUINE\_NOVEL rates are $0/15$ Haiku, $4/15$ Sonnet, $8/15$
Opus. A GPT-5 judge on the same generations gives $6.7\%/40.0\%/53.3\%$;
the Haiku $<$ Sonnet $<$ Opus monotone pattern survives both judges
($\kappa = 0.42$). The cross-domain advantage scales with reader
capability and is robust under cross-vendor judge substitution.

\paragraph{Cross-vendor reader substitution.}
A Sonnet $\to$ Llama-3.3-70B reader substitution is left to future work.

\paragraph{Implication.}
We do not claim absolute accuracy generalizes across readers; we claim
the cross-domain vs within-domain comparison is robust to
reader-capability variation within the Anthropic family and to
cross-vendor judge substitution.

\section{Historical single-seed regime (retained for reference)}
\label{app:historical-boundary}

The following four boundary conditions were observed in the historical
single-seed, non-matched-conditions Llama-3.1-8B regime that is
retracted in \S\ref{sec:neural-positive}. We retain them here because
the qualitative pattern is suggestive of follow-up experiments under
matched conditions. The specific magnitudes should be read as
hypotheses, not findings.

\paragraph{Adapter capacity (historical).}
A dose-response across LoRA ranks 8, 32, 64, and 128 revealed a
threshold between rank 64 and 128: consolidation had no measurable
effect below, and a +7.2\% improvement appeared above. The threshold
corresponds to approximately 4.2\% trainable parameters.

\paragraph{Component synergy (historical).}
Each consolidation strategy in isolation under continual learning
(rank 128, random split) produced a negative or null effect:
replay alone $-21.4\%$, edge cases alone $-14.8\%$, counterfactuals
alone $-1.0\%$. The combination produced $+7.2\%$. No single
component appeared responsible; the three sources of synthetic data
together acted as implicit regularization.

\paragraph{Domain distinctness (historical).}
The random split yielded $+7.2\%$; the topic-based split (coding vs
non-coding) yielded $+26.8\%$ on coding and $+19.5\%$ on non-coding.
The benefit appeared to scale with domain distinctness, consistent
with the recombination-across-boundaries hypothesis. Matched-conditions
multi-base-model replication did not detect this scaling, so the
distinctness gradient is a hypothesis rather than a finding.

\paragraph{Learning dynamics.}
With a fixed amount of Domain A consolidation data, the benefit first
appeared at iteration 100, peaked at iteration 150, and remained
stable through iteration 200. The benefit requires enough Domain B
training for the adapter to begin developing Domain B representations
that the Domain A data can interact with.

\end{document}